\newcommand{\AB}[1]{ {\color{black} #1} }
\begin{document}

\title
{Optimal Learning}
\author{ 
  Peter Binev\thanks{
   Department of Mathematics, University of South Carolina, Columbia, SC 29208, binev@math.sc.edu;}, Andrea Bonito\thanks{
   Department of Mathematics,
   Texas A\&M University, College Station, TX 77843, bonito@tamu.edu;}, Ronald DeVore\thanks{
   Department of Mathematics,
   Texas A\&M University, College Station, TX 77843, ronalddevore@tamu.edu;}, and Guergana Petrova\thanks{
   Department of Mathematics,
   Texas A\&M University, College Station, TX 77843, gpetrova@math.tamu.edu.}}

\hbadness=10000
\vbadness=10000
\newtheorem{lemma}{Lemma}[section]
\newtheorem{prop}[lemma]{Proposition}
\newtheorem{cor}[lemma]{Corollary}
\newtheorem{theorem}[lemma]{Theorem}
\newtheorem{remark}[lemma]{Remark}
\newtheorem{example}[lemma]{Example}
\newtheorem{definition}[lemma]{Definition}
\newtheorem{proper}[lemma]{Properties}
\newtheorem{assumption}[lemma]{Assumption}
%
\newenvironment{disarray}{\everymath{\displaystyle\everymath{}}\array}{\endarray}

\def\RR{\rm \hbox{I\kern-.2em\hbox{R}}}
\def\NN{\rm \hbox{I\kern-.2em\hbox{N}}}
\def\ZZ{\rm {{\rm Z}\kern-.28em{\rm Z}}}
\def\CC{\rm \hbox{C\kern -.5em {\raise .32ex \hbox{$\scriptscriptstyle
|$}}\kern
-.22em{\raise .6ex \hbox{$\scriptscriptstyle |$}}\kern .4em}}
\def\vp{\varphi}
\def\<{\langle}
\def\>{\rangle}
\def\t{\tilde}
\def\i{\infty}
\def\e{\varepsilon}
\def\sm{\setminus}
\def\nl{\newline}
\def\o{\overline}
\def\wt{\widetilde}
\def\wh{\widehat}
\def\cT{{\cal T}}
\def\cA{{\cal A}}
\def\cI{{\cal I}}
\def\cV{{\cal V}}
\def\cB{{\cal B}}
\def\cF{{\cal F}}
\def\cY{{\cal Y}}

\def\cD{{\cal D}}
\def\cP{{\cal P}}
\def\cJ{{\cal J}}
\def\cM{{\cal M}}
\def\cO{{\cal O}}
\def\Chi{\raise .3ex
\hbox{\large $\chi$}} \def\vp{\varphi}
\def\lsima{\hbox{\kern -.6em\raisebox{-1ex}{$~\stackrel{\textstyle<}{\sim}~$}}\kern -.4em}
\def\lsim{\hbox{\kern -.2em\raisebox{-1ex}{$~\stackrel{\textstyle<}{\sim}~$}}\kern -.2em}
\def\[{\Bigl [}
\def\]{\Bigr ]}
\def\({\Bigl (}
\def\){\Bigr )}
\def\[{\Bigl [}
\def\]{\Bigr ]}
\def\({\Bigl (}
\def\){\Bigr )}
\def\L{\pounds}
\def\pr{{\rm Prob}}
\newcommand{\cs}[1]{{\color{magenta}{#1}}}
\def\ds{\displaystyle}
\def\ev#1{\vec{#1}}     
\newcommand{\lt}{\ell^{2}(\nabla)}
\def\Supp#1{{\rm supp\,}{#1}}
\def\R{\mathbb{R}}
\def\E{\mathbb{E}}
\def\nl{\newline}
\def\T{{\relax\ifmmode I\!\!\hspace{-1pt}T\else$I\!\!\hspace{-1pt}T$\fi}}
\def\N{\mathbb{N}}
\def\Z{\mathbb{Z}}
\def\N{\mathbb{N}}
\def\Zd{\Z^d}
\def\Q{\mathbb{Q}}
\def\C{\mathbb{C}}
\def\Rd{\R^d}
\def\gsim{\mathrel{\raisebox{-4pt}{$\stackrel{\textstyle>}{\sim}$}}}
\def\sime{\raisebox{0ex}{$~\stackrel{\textstyle\sim}{=}~$}}
\def\lsim{\raisebox{-1ex}{$~\stackrel{\textstyle<}{\sim}~$}}
\def\div{\mbox{ div }}
\def\M{M}  \def\NN{N}                  
\def\L{{\ell}}               
\def\Le{{\ell^1}}            
\def\Lz{{\ell^2}}
\def\Let{{\tilde\ell^1}}     
\def\Lzt{{\tilde\ell^2}}
\def\Ltw{\ell^\tau^w(\nabla)}
\def\t#1{\tilde{#1}}
\def\la{\lambda}
\def\La{\Lambda}
\def\ga{\gamma}
\def\BV{{\rm BV}}
\def\Ga{\eta}
\def\al{\alpha}
\def\cZ{{\cal Z}}
\def\cA{{\cal A}}
\def\cU{{\cal U}}
\def\argmin{\mathop{\rm argmin}}
\def\argmax{\mathop{\rm argmax}}
\def\prob{\mathop{\rm prob}}

\def\cO{{\cal O}}
\def\cA{{\cal A}}
\def\cC{{\cal C}}
\def\cS{{\cal F}}
\def\bu{{\bf u}}
\def\bz{{\bf z}}
\def\bZ{{\bf Z}}
\def\bI{{\bf I}}
\def\cE{{\cal E}}
\def\cD{{\cal D}}
\def\cG{{\cal G}}
\def\cI{{\cal I}}
\def\cJ{{\cal J}}
\def\cM{{\cal M}}
\def\cN{{\cal N}}
\def\cT{{\cal T}}
\def\cU{{\cal U}}
\def\cV{{\cal V}}
\def\cW{{\cal W}}
\def\cL{{\cal L}}
\def\cB{{\cal B}}
\def\cG{{\cal G}}
\def\cK{{\cal K}}
\def\cX{{\cal X}}
\def\cS{{\cal S}}
\def\cP{{\cal P}}
\def\cQ{{\cal Q}}
\def\cR{{\cal R}}
\def\cU{{\cal U}}
\def\bL{{\bf L}}
\def\bl{{\bf l}}
\def\bK{{\bf K}}
\def\bC{{\bf C}}
\def\X{X\in\{L,R\}}
\def\ph{{\varphi}}
\def\D{{\Delta}}
\def\H{{\cal H}}
\def\bM{{\bf M}}
\def\bx{{\bf x}}
\def\bj{{\bf j}}
\def\bG{{\bf G}}
\def\bP{{\bf P}}
\def\bW{{\bf W}}
\def\bT{{\bf T}}
\def\bV{{\bf V}}
\def\bv{{\bf v}}
\def\bt{{\bf t}}
\def\bz{{\bf z}}
\def\bw{{\bf w}}
\def \span{{\rm span}}
\def \meas {{\rm meas}}
\def\rhom{{\rho^m}}
\def\diff{\hbox{\tiny $\Delta$}}
\def\EE{{\rm Exp}}
\def\lll{\langle}
\def\argmin{\mathop{\rm argmin}}
\def\codim{\mathop{\rm codim}}
\def\rank{\mathop{\rm rank}}

\def\argmax{\mathop{\rm argmax}}
\def\dJ{\nabla}
\newcommand{\ba}{{\bf a}}
\newcommand{\bb}{{\bf b}}
\newcommand{\bc}{{\bf c}}
\newcommand{\bd}{{\bf d}}
\newcommand{\bs}{{\bf s}}
\newcommand{\bff}{{\bf f}}
\newcommand{\bp}{{\bf p}}
\newcommand{\bg}{{\bf g}}
\newcommand{\by}{{\bf y}}
\newcommand{\br}{{\bf r}}
\newcommand{\be}{\begin{equation}}
\newcommand{\ee}{\end{equation}}
\newcommand{\bea}{$$ \begin{array}{lll}}
\newcommand{\eea}{\end{array} $$}
\def \Vol{\mathop{\rm  Vol}}
\def \mes{\mathop{\rm mes}}
\def \Prob{\mathop{\rm  Prob}}
\def \exp{\mathop{\rm    exp}}
\def \sign{\mathop{\rm   sign}}
\def \sp{\mathop{\rm   span}}
\def \rad{\mathop{\rm   rad}}
\def \vphi{{\varphi}}
\def \csp{\overline \mathop{\rm   span}}
%
%
\newcommand{\beqn}{\begin{equation}}
\newcommand{\eeqn}{\end{equation}}
\def\beginproof{\noindent{\bf Proof:}~ }
\def\endproof{\hfill\rule{1.5mm}{1.5mm}\\[2mm]}

\newenvironment{Proof}{\noindent{\bf Proof:}\quad}{\endproof}

\renewcommand{\theequation}{\thesection.\arabic{equation}}
\renewcommand{\thefigure}{\thesection.\arabic{figure}}

\makeatletter
\@addtoreset{equation}{section}
\makeatother

\newcommand\abs[1]{\left|#1\right|}
\newcommand\clos{\mathop{\rm clos}\nolimits}
\newcommand\trunc{\mathop{\rm trunc}\nolimits}
\renewcommand\d{d}
\newcommand\dd{d}
\newcommand\diag{\mathop{\rm diag}}
\newcommand\dist{\mathop{\rm dist}}
\newcommand\diam{\mathop{\rm diam}}
\newcommand\cond{\mathop{\rm cond}\nolimits}
\newcommand\eref[1]{{\rm (\ref{#1})}}
\newcommand{\iref}[1]{{\rm (\ref{#1})}}
\newcommand\Hnorm[1]{\norm{#1}_{H^s([0,1])}}
\def\int{\intop\limits}
\renewcommand\labelenumi{(\roman{enumi})}
\newcommand\lnorm[1]{\norm{#1}_{\ell^2(\Z)}}
\newcommand\Lnorm[1]{\norm{#1}_{L_2([0,1])}}
\newcommand\LR{{L_2(\R)}}
\newcommand\LRnorm[1]{\norm{#1}_\LR}
\newcommand\Matrix[2]{\hphantom{#1}_#2#1}
\newcommand\norm[1]{\left\|#1\right\|}
\newcommand\ogauss[1]{\left\lceil#1\right\rceil}
\newcommand{\QED}{\hfill
\raisebox{-2pt}{\rule{5.6pt}{8pt}\rule{4pt}{0pt}}%
  \smallskip\par}
\newcommand\Rscalar[1]{\scalar{#1}_\R}
\newcommand\scalar[1]{\left(#1\right)}
\newcommand\Scalar[1]{\scalar{#1}_{[0,1]}}
\newcommand\Span{\mathop{\rm span}}
\newcommand\supp{\mathop{\rm supp}}
\newcommand\ugauss[1]{\left\lfloor#1\right\rfloor}
\newcommand\with{\, : \,}
\newcommand\Null{{\bf 0}}
\newcommand\bA{{\bf A}}
\newcommand\bB{{\bf B}}
\newcommand\bR{{\bf R}}
\newcommand\bD{{\bf D}}
\newcommand\bE{{\bf E}}
\newcommand\bF{{\bf F}}
\newcommand\bH{{\bf H}}
\newcommand\bU{{\bf U}}
\newcommand\cH{{\cal H}}
\newcommand\sinc{{\rm sinc}}
\def\enorm#1{| \! | \! | #1 | \! | \! |}

\newcommand{\dm}{\frac{d-1}{d}}

\let\bm\bf
\newcommand{\bbeta}{{\mbox{\boldmath$\beta$}}}
\newcommand{\bal}{{\mbox{\boldmath$\alpha$}}}
\newcommand{\bbi}{{\bm i}}

\newcommand{\nnew}[1]{ {\color{black} #1} }
\def\mnew{\color{Blue}}
\def\wnew{\color{magenta}}

\newcommand{\dI}{\Delta}
\newcommand\aconv{\mathop{\rm absconv}}

\maketitle
\date{}
  \begin{abstract}    
  This paper studies  the problem of learning an unknown function $f$ from given data about $f$.  The learning problem is to give an approximation $\hat f$ to
  $f$ that predicts the values of $f$ away from the data.  There are numerous settings for this  learning problem  depending on (i) what additional
  information we have about $f$ (known as a model class assumption), (ii) how we measure the accuracy of how well $\hat f$ predicts $f$, (iii) what is known about the data and data sites, (iv) whether the data observations are polluted by noise. 
  A mathematical description of the optimal performance possible (the smallest possible error of recovery)   is known in the presence of a model class assumption.   Under standard  model class assumptions, it is shown in this paper that a near optimal $\hat f$  can be found by solving a certain
  {finite-dimensional } over-parameterized optimization problem with a penalty term.  Here, near optimal means that the error is bounded by a fixed constant times the optimal error.  This explains the advantage  of over-parameterization which is commonly used in modern machine learning.    The main results of this paper prove  that 
  over-parameterized learning with an appropriate loss function gives a near optimal approximation $\hat f$ of the function $f$ from which the data is collected.  Quantitative bounds are given for how much over-parameterization needs to be employed and how the penalization needs to be scaled in order to guarantee a near optimal recovery of $f$.  An extension of these results to the case where the data is polluted by additive deterministic noise is  also given.\end{abstract}

{\bf Key words} Optimal learning, over-parametrization, regularization, Chebyshev radius, Banach space

{\bf Mathematics Subject Classification} 46N10, 47A52, 65J20, 68T05
   
   \section{Introduction}
   \label{S:intro}  
   Learning an unknown function $f$ from  given data observations is a dominant theme in data science.    The central problem is to use the
   data observations of $f$ to construct a function $\hat f$ which approximates $f$ away from the data.  This paper is concerned with    evaluating how well such an  approximation $\hat f$ performs and determining the best possible performance  among all choices of an $\hat f$.  Given answers to these fundamental questions,
   we then turn to the construction of numerical procedures and evaluate their performance against the known best possible performance.
   
   To place ourselves in a firm mathematical setting, we assume that $f$ is in some Banach space $X$ of functions and the performance of
   the approximant $\hat f$ is measured  by $\|f-\hat f\|_X$.  Typical choices for $X$ are the $L_p(\Omega)$ spaces with $\Omega$ a domain in $\R^d$, or  smoothness spaces such as   Sobolev spaces on $\Omega$.  The latter case arises in the context of
  solving  Partial Differential Equations (PDEs).
   
   In the absence of additional information about $f$, it is easy to see that there can be no performance guarantee, i.e. for any choice of $\hat f$,
   the error $\|f-\hat f\|_X$ can be arbitrarily large for a function $f$ which satisfies the data.     The additional information we impose on $f$ is referred to as  model class information. The appropriate model class for a learning problem depends very much on the underlying application and   is a compilation of all that is known about the function $f$ from analysis of the application.  For example, in PDE applications, the model class is typically provided by physics or  regularity theorems for the PDE in hand.  In other applications, such as image or video classification,   appropriate model classes are
   less transparent and open for debate.

    Mathematically, a  {\it model class} 
   is a compact subset $K $ of $X$.  
   Given such a model class, the learning problem is to determine a best approximant  $\hat f$ given only the information that $f$ is in $ K$ and $f$ satisfies
   the given data.  A best function $\hat f$ is called the {\it optimal recovery} of $f$.  
   
   Optimal recovery has the following well-known   mathematical description (see e.g. \cite{MR,TW,DPW}).
 Let us denote the set of all possible candidates for $f$ by $K^*$,  i.e.,
   \be
   \label{K*}
   K^*:=\{f\in K: f\ {\rm satisfies}  \  {\rm the}  \  {\rm data}\}.
   \ee
   { This is a compact subset of $K$.  When we are presented the data, all we know is that it came from some $f\in K$ but we do not know which
   one.  Thus, the task of optimal recovery is to find one function $g\in X$ which simultaneously approximates all elements in  $K^*$ to an error $R$
   with $R$ as small as possible. This best error is described in the next paragraph.
   
     We denote by $B(z,r)_X$ the ball in $X$ of radius $r$ with center $z$.  Given any compact set $S$ in $X$, the  Chebyshev radius $R(S)_X$ of $S$ is defined as
   \be 
   \label{Chebradius}
   R(S)_X:=\inf\{r: \ S\subset B(z,r)_X \ {\rm for \ some \ }  z\in X\}.
   \ee
     While it can happen that $R(S)_X$ may not be attained, the number $R(S)_X$ is well defined and is the {\it error of optimal recovery}. 
     It can also happen that $R(S)_X$ is assumed but the center $z$ is not in $S$.
     
    We now return to our set $S=K^*$. When $R=R(K^*)_X$ is assumed by a ball $B(z,R)$, then $z$ would provide an optimal recovery for $K^*$.  
       For more details on optimal recovery and Chebyshev balls, we refer the reader to 
     \cite{DPW}.}
     
     We return to our set $K^*$.  While the previous paragraph gives a simple mathematical
     description of the optimal recovery of all functions $f\in K^*$, it is nowhere close to giving a numerical { procedure} for learning since  finding an appropriate $z$ is a numerical
     challenge whose difficulty depends on the nature of $K$.  Nevertheless, the radius $R(K^*)_X$ gives a benchmark for measuring the success of a numerical procedure.  If a numerical procedure produces an $\hat f\in X$ that can be shown to give an error
     \be
     \label{nearbest}
     \|f-\hat f\|_X\le CR(K^*),\quad f\in K^*,
     \ee
     it is said to be a {\it near optimal} recovery of $f$ with constant $C$.   Notice that any function $\hat f\in B(K^*)_X$ is a near optimal recovery with constant $2$.
     If a numerical procedure is shown to produce  a near optimal recovery $\hat f$ of $f$, one can rest assured that no other numerical method will perform better
     save for the size of the constant $C$ and issues centering on the numerical cost to implement the { method}.

    \subsection{Dependence on data}
    \label{SS:data}  The error $R(K^*)_X$ of optimal recovery depends very much on the given data.  We assume throughout our paper that this data is given by
     the values of $m$ linear functionals $\lambda_1,\dots,\lambda_m$   applied to $f$.  These linear functionals should be defined for all functions in $K$.  In the simplest setting of noiseless data, the values
     \be
     \label{data}
     w_i:=\lambda_i(f),\quad i=1,\dots,m,
     \ee
     is the data information provided to us about $f$. Instead of $K^*$ we shall use the notation
     \be
   \label{Kw}
   K_w:=\{f\in K: \lambda_i(f)=w_i,\ i=1,\dots,m\}, \quad  w=(w_1, \ldots,w_m),
   \ee
  to indicate the dependence of this set on the data.  With this notation,
  the optimal recovery rate of $f$ from the given information is 
  \be 
  \label{or}
  {\bf optimal \ recovery \  rate}\ = \ R(K_w)_X.
  \ee

  Given that the $m$ data functionals $\lambda_1,\dots,\lambda_m$ are fixed, we
  define
  \be 
  \label{WK}
  W:= W_K:=W(\lambda_1,\dots,\lambda_m;  K):= \{(\lambda_1(g),\dots,\lambda_m(g)):\ g\in K\}\subset \R^m,
  \ee  
  which is the set of all possible data vectors that can arise from observing an
  $f\in K$.   So, $K_w$ is defined for all $w\in W$.  For all other 
  { values $\bar w$, namely for all $\bar w\in\R^m\setminus W$,
  we define $K_{\bar w}:=\emptyset$}.

      The most convenient assumption to make about the $\lambda_j$'s is that they are linear functionals from the dual space $X^*$ of $X$.  However, a common setting in learning is to measure error in the $X=L_2(\Omega,\mu)$ norm, where $\Omega\subset \R^d$  and $\mu$ is a Borel measure, and    to assume that the data are point values of
     $f$, which of course are not linear functionals on all  of $X$ in this case.  The latter case can be treated if the model class $K$ admits point evaluation. A natural assumption in this case is that $K\subset C(\Omega)$, where $C(\Omega)$ is the space of continuous functions defined on $\Omega$.  Another common setting for point evaluation is to assume that $K\subset H$, where $H$ is a reproducing kernel Hilbert space (RKHS) which may be different from the space
     $X$ where we measure performance.    In \S \ref{S:ptvalues},  we study point evaluation in cases where $X$ itself does not admit point evaluations
     as linear functionals.
     
     There are  the following common settings for the data observations:  
     \vskip .1in
     \noindent
     {\bf Setting I:} The common general setting is that  the $\lambda_j$'s are any fixed linear functionals defined on $K$ and we
     had no influence in their choice.  
     \vskip .1in
     \noindent
     {\bf Setting II:} We are free to choose the functionals $\lambda_j$, subject to the restriction that there are only $m$ of them.  
     \vskip .1in
     \noindent
     {\bf Setting III:} The  $\lambda_j$'s   are given by a random selection of $m$ independent draws under some probability distribution.         \vskip .1in
     \noindent
     {\bf Settings IV,V,VI:} The functionals are chosen as in the above cases (I, II, III) but are restricted to come from a dictionary of possible functionals.  Point evaluation falls into this setting.
     \vskip .1in 
     
     Since {\bf Setting I} is the most often used, in this paper we try to stay within this setting as much as possible.  {\bf Setting II} is usually referred to as
     {\it directed learning} and is a well studied setting in functional analysis.  If the functionals are allowed to be any $m$ functionals from $X^*$ with the budget $m$ fixed, then  the best choice for the $m$  functionals gives an optimal recovery rate
     \be
     \label{Gwidth}
     d^m(K)_X:=\inf_{\lambda_1,\dots,\lambda_m\in X^*} \sup_{w\in\R^m} R(K_w)_X
     \ee
     which is known as the {\it Gelfand width} of $K$.
     For classical model classes such as the unit ball $K:=U(Y)$ of a smoothness space $Y$ that embeds into $X$, the Gelfand widths
     are known at least asymptotically as $m\to\infty$.  Standard reference for results on Gelfand widths in classical settings are \cite{P,LGM}
      and the citations therein.  Notice that the Gelfand width would tell us the minimum number  $m$ of measurements needed to guarantee a desired
     accuracy of performance.  Namely,  if we desire recovery error at most $\e$, then we would need $m$ large enough so that $d^m(K)_X\le \e$. 
     
     {\bf Setting III} seems to be the most often studied in modern learning.  The optimal performance in this case is given by the expected width
   \be
   \label{Gave}
   d^m_{\rm ave}(K)_X:= {\rm Exp}_{\lambda_1,\dots,\lambda_m} \sup_{w\in\R^m} R(K_w)_X,
     \ee
   where the expectation is taken with respect to random independent draws according to the underlying probability measure.
   
     If restrictions are placed on which measurement functionals are allowed to be used, then the notions of Gelfand widths and expected widths
     are modified accordingly.  In the case that these functionals are required to be point evaluations, the corresponding Gelfand width is referred to
     as the sampling numbers of $K$ and the information is referred to as {\it standard information} in  the field of Information Based  Complexity (IBC). We will denote sampling numbers by
     \be 
     \label{samplingnumber}
     s_m(K)_X 
     \ee 
     in this paper. Two of the standard references on this line of investigation are \cite{TW,NW}. Because of their importance in learning,  finding the sampling numbers for various model classes $K$ is an active research  topic (see e.g. \cite{CDL, KU, KNS,T}).

     Although this is not the theme of the present paper, let us emphasize that  computing the Gelfand widths and expected widths of model classes $K$ is an important problem in analysis. It is also  important for the learning community since it gives the best performance that would be possible in a numerical {procedure} for learning, and therefore it can serve as a benchmark for evaluating the performance of a particular proposed algorithm.  While quite a bit is known about these widths
     for classical model classes $K$, most of the known results
     are not useful in modern learning.
     Namely, it is known that for classical model classes the sampling numbers (see \cite{KNS}) and Gelfand widths suffer the curse of dimensionality. This precludes the use of these classical model classes in modern learning where the dimension $d$ of the physical space is very large
     (for example $d>10^4$ in many classification problems). 
      Hence, a general open question is to find appropriate model classes in high dimensions that match the  specific  application and then show that their sampling numbers and/or Gelfand widths avoid the curse of dimensionality.

     \subsection{Discretization of the learning problem}
     \label{SS:discretization}
     The above notions are abstract and do not provide a numerical recipe for learning.  Rather, they provide only a benchmark for
     optimal performance.  The goal of learning is to {design a   procedure} that provably converges to an optimal or near optimal recovery of $f$, i.e., reaches the optimal benchmark.  The development of  learning {procedures}  usually proceeds through two stages.  The first is to formulate a
     {finite-dimensional } optimization problem associated to the data.   Here, {finite-dimensional } means that the optimization problem depends on only a finite number of parameters.
     The second stage is to propose and analyze numerical procedures for solving the { finite-dimensional} optimization. 
     In this paper, we shall primarily concern ourselves with the first stage and ask the question: 
     \vskip .1in
     \noindent 
     {\bf Which { finite-dimensional } optimization problems, if they are successfully numerically implemented, are guaranteed to provide the optimal learning possible from the given data?}
     \vskip .1in
     \noindent
     This paper provides an answer to this question in a variety of settings.  Namely, it is shown that the solution to
     a suitable   over-parameterized penalized least squares optimization
     problem gives a near optimal learning {procedure}.  This fact may shed some light on why over-parameterized learning using neural networks is preferred in modern machine learning.
     We touch upon techniques for numerically implementing the { finite-dimensional } optimization   only briefly when we discuss some concrete examples.

     \subsection{Outline of the paper}
     \label{SS:outline}
       In the next section, we begin by recalling the mathematical description of optimal learning {procedures} based on Chebyshev balls.
     The remainder of the  paper concentrates on introducing { finite-dimensional }  minimization formulations, under a model class assumption,  whose solution is near optimal.
     Each of these { finite-dimensional } minimizations  can be taken of the form
     \be 
     \label {hatf}
     \hat f \in   \argmin_{g\in\Sigma_n} \left(\tau \sum_{j=1}^m [w_j-\lambda_j(g)]^2 +\mu \ {\bf pen}_K(g)\right),
     \ee
     where $\tau,\mu>0$ are suitably chosen parameters,  $\Sigma_n$ is a linear space of dimension $n$ or a nonlinear  set described by $n$ parameters, and {\bf pen} is a penalty term depending on the model class $K$.   
     
      We have chosen to call these problems { finite-dimensional } minimization problems since the minimization is performed over the set $\Sigma_n$ depending on a finite number $n$ of parameters.  In going further, we will denote this set simply by $\Sigma$.  The number of its parameters depends on the accuracy $\delta>0$ with which $\Sigma$  approximates $K$. For example,  we will impose that 
     $$
     \dist(K,\Sigma)_X:=\sup_{f\in K}\dist(f,\Sigma)_X=
  \sup_{f\in K}\,\inf_{g\in \Sigma}\|f-g\|_X<\delta,
  $$
  for suitably chosen small enough $\delta>0$. The value of $\delta$   will determine the number $n=n(\delta)$ of parameters needed to describe $\Sigma$. Thus, the smaller the $\delta$, the bigger the  $n$, which corresponds to the fact that the set $\Sigma$ is described by $n$  parameters whose number will be (in general) much larger  than the number $m$ of data available for $f$ (and thus justify the use of the term {\it over-parametrized}
  optimization problem).

     \begin{remark}
     \label{R:unique}
     In general, the minimization problem in \eref{hatf} may not have a unique solution from $\Sigma_n$.  Unless stated otherwise, the statements of the theorems that follow hold for any minimizer.
     \end{remark}
     
     We determine a penalty term for each model class $K$ of $X$ and each of the above settings for the data.  These are given in \S \ref{S:nosect}.
     If the model class $K$ has additional structure, for example, if it is convex and centrally symmetric about the origin, then the penalty can be
      simplified and is presented in \S \ref{SS:convex}. 
      The   case of data consisting of point evaluations  needs a slightly different treatment since the data is no longer necessarily given by a linear functional on $X$.  Point evaluation is considered in \S \ref{S:ptvalues}.

       The above description of the learning problem assumes that the data
        are exact.  A more realistic assumption is that the data observations are corrupted by noise. We have chosen to treat the noiseless case first and then later address how the addition of noise deteriorates the accuracy of best recovery. In \S \ref{S:noisy}, we consider the case when the data observations are corrupted by
     additive deterministic noise.   In this paper, we do not treat the more common assumption in statistics of stochastic noise and the corresponding minimax estimates since 
     the treatment of that case requires substantially new ideas.  However, we do discuss the case of random sampling in \S\ref{S:sampling}. In numerical implementations it is convenient to use other forms of the loss function appearing in \eqref{hatf}. We discuss this aspect in \S\ref{S:variants}.

      In the final section of this paper, we study  a couple of specific 
      settings in learning with the aim of discussing the numerical aspects of implementing the proposed { finite-dimensional } optimization.  In our first example, we treat the case when the model class $K$ is the unit ball of a Sobolev space and the recovery error is measured in $L_2(\Omega)$.  This setting is not realistic for the modern problems of learning, but it does allow us to put forward a specific numerical method for solving the optimal discretization for which convergence of the numerical method is known, namely the  Finite Element Method.
      Our second example is more germane  to modern learning. While we continue to measure error in $L_2(\Omega)$, the model class $K$ is taken as the convex hull of the ReLU single layer neural network dictionary.  We describe  the
      correct optimization problem for an optimal learning {procedure}. While much is known about solutions to this { finite-dimensional }  problem \cite{PN,SX} and numerical methods for solving the optimization problem \cite{Ti,HTT}, very fundamental questions concerning what is the asymptotic
      behavior of the optimal error of recovery are not yet settled.
      This is discussed in more details in \S\ref{SS:Barron}.

    There is a rather vast literature on optimal recovery and learning.  We close this introduction with
   a few remarks which can serve to explain how our results fit into the current literature.  Let  $X$ be the Banach space in whose norm
  $ \|\cdot\|_X$ we measure the optimal recovery or learning error.  If the model class $K$ is the unit ball $K:=\{f:\ \|f\|_Y\le 1\}$ of a Banach space $Y$ which compactly embeds into $X$,  we let  $g$ be the minimum norm interpolant of the data.  That is, $g$ is the function in $Y$
   which satisfies the data and has smallest $Y$ norm.  Obviously, $g$ is in $K$ and is therefore a near optimal recovery with constant $C\le 2$.  If $Y$ is a Hilbert space (regardless of whether $X$ is) then it can be shown that $g$ is actually the Chebyshev center of $K$ 
   and $g$ is an optimal recovery ($C=1$).  The function $g$ is sometimes referred to as an interpolating spline even though it is not necessarily a spline function in the classical sense.  Of course, this does not directly give { an algorithm} since one still must compute $g$.
   There is a general numerical strategy for computing $g$ which rests on computing the Riesz representers of the linear functional $\lambda_j$ viewed as functionals on $Y$.  If one adopts this approach then one still must answer the question of how accurate the computation of the Riesz representers
   must be in order to have near optimality of recovery.   There are some settings in which one can prove that the minimum norm interpolant
   $g$ lives in a finite dimensional linear space or finite dimensional manifold (see \cite{unser2021unifying} and \cite{PN} for recent literature of this type). When such representer theorems  are proven, they provide a powerful numerical tool.  The approach discussed in the present paper differs from the minimum norm interpolant approach in that we treat more general model classes $K$ and we put forward  a { finite-dimensional } optimization problem based on penalties that is always guaranteed to give a near optimal solution.  Moreover, we quantify how fine we must
   choose the discretization $\Sigma_n$ and how small we must choose the penalty parameter $\tau$. 

   Optimization problems of the type proposed here to find near optimal recovery are very common in the literature and fall under the
   notions of Tikhonov regularization, least squares minimization, and LASSO problems { (see \cite{Lbook})}.  While these methods are used often, the typical results in the literature do not describe how this is to be carried out if one wants to guarantee a near optimal recovery. {Certified performance is usually proven in  particular applications (such as sparse signal recovery)  and fixed form of the loss functions (see \cite{BCW,BRT,BLS,MY}). In this paper, we present a general framework  and derive  provable bounds
on  how much over-parameterization is needed and how the penalization
has to be scaled in order to obtain a near optimal recovery of the observed function.}

   \section {Learning in a Banach space setting}
   \label{S:Banach} We begin  by considering  the case where we measure accuracy in a Banach space $X$ and the model class $K$ is simply a compact subset of $X$.  We assume that  the data are the observations
   \eref{data},
   %
   where the $\lambda_j\in X^*$ are linear functionals on $X$.  The vector
   %
  $
   w:=(w_j)_{j=1}^m\in \R^m
   $
   is called the {\it data observation vector} of the unknown $f$ and the $\lambda_j$, $j=1,\dots,m$, are the observation functionals.
    Without loss of generality, we can assume that the $\lambda_j$'s are normalized to have
   norm one, that is, $\|\lambda_j\|_{X^*}=1$, $j=1,\dots, m$. 
 All theorems that follow can be restated in the general case, with the norms of the functionals present as constants at the appropriate places.
   %
   We shall also use the notation
   \be
   \label{lambda}
   \lambda(g):=(\lambda_1(g),\dots,\lambda_m(g))\in\R^m,\quad g\in X.
   \ee
   and \eref{WK}
   %
   throughout this paper. 
 Notice that since $\lambda$ is continuous on $K$ and $K$ is compact in $X$,  the set $W_K$ is a compact subset of $\R^m$.
   Obviously, all these quantities depend  on the $\lambda$ but we generally do not indicate this dependence since we think of the observation  functionals as fixed.
   
   As noted in the introduction, the totality of  information we have about the unknown function $f$ is that $f\in K_w$ for the given data observations $w$. As with $K^*$, we define $B(K_w)_X$ to be a Chebyshev (i.e., smallest) ball in $X$ which contains $K_w$.  An optimal recovery of $f$ is the 
   Chebyshev center $z_w$ of $B(K_w)_X$ and the error of optimal recovery is the Chebyshev radius $R(K_w):=R(K_w)_X$ of $B(K_w)_X$.   The goal of learning  is to find a numerical procedure which would take the data
   and the knowledge of $K$ and create an approximant $\hat f\in X$ such that
   \be
   \label{no1}
   \|f-\hat f\|_X\le C R(K_w)_X,
   \ee
   with a reasonable constant $C$.  We call such an approximant $\hat f$  a {\it near optimal recovery}  for $K_w$ with constant $C$. 
   \begin{remark}
   \label{R:nearoptimal}
   It can happen that $R(K_w)_X$ is zero.  This would mean that there is only one function in $K_w$, i.e., only one function from $K$ that fits the data. In this  case we  would only have near optimality  in the above sense if $\hat f=f$.  To avoid this exceptional case, we   assume
   in going forward that $R(K_w)_X>0$ in the theorems that follow.  It is easy to formulate a version of each of these theorems to handle the case $R(K_w)_X=0$ but we leave that task to the reader.
   \end{remark}
   
    In this paper, we are interested in formulating { finite-dimensional } optimization problems whose solution would provide a near optimal approximant
    $\hat f$ to $f$.   We begin by giving sufficient conditions on a function $\hat f$ to be a near optimal approximant.
    
    \subsection{A preliminary result}
    \label{SS:preliminary}
      It seems very doubtful that a numerical method would find an element $g\in K_w$ when given just $w$ and the knowledge of $K$.  A more reasonable numerical task would be to find a $g\in X$ such that $g$ almost satisfies the data and is close to $K$.  We can formulate the concept of almost satisfying the data in many equivalent ways  since the data observations are finite.  To be concrete, we shall use
      the weighted empirical $\ell_2$ norm
      \be
      \label{l2}
     \|v\|:= \|v\|_{\ell_2(\R^m)}:= \Big[\frac{1}{m}\sum_{j=1}^m |v_j|^2\Big]^{1/2},\quad v\in\R^m.
      \ee
    { Notice that the data mapping $\lambda$ is a linear mapping from $X$ to $\R^m$ whose norm is one when we use \eref{l2} as the norm for $\R^m$.  This means that $\lambda$ is a Lipschitz mapping:
     \be 
     \label{lambdaLip}
     \|\lambda(f)-\lambda(g)\|\le \|f-g\|_X,\quad f,g \in X.
     \ee
     We shall use this fact repeatedly, usually without further mention, in this paper.}
       
  Let us suppose that  when given an $\e>0$ we can  find a $g_\e\in X$ for which 
  \be
  \label{morereasonable}
 \|\lambda(g_\e)-w\|\le \e\quad {\rm and} \quad \dist(g_\e,K)_X\le \e.
  \ee
  A numerical scheme may have the ability to drive $\e$ to zero at the expense of higher levels of computation.
  The question arises as to {\it which level of accuracy  $\e$ would  guarantee 
 that  $g_\e$ provides a near optimal recovery.  Equivalently, we would need $g_\e$ to provide good  approximation to the Chebyshev center $z_w$ of $K_w$.}   To formulate such a bound, we introduce
  the following {\it expanded Chebyshev radius}
  \be
  \label{Rwe}
   R(K(w,\e))_X,\quad \hbox{where}\quad  K(w,\e):=\bigcup_{w':\,\|w'-w\|\le \e} K_{w'}, 
  \ee
  which is the Chebyshev radius of the inflated set  $K(w,\e)$.  Notice that $K_w\subset K(w,\e)$ for all $\e>0$.  We discuss properties of $R(K(w,\e))_X$ in more detail in the next subsection. The behavior of this expanded radius is important for deciding how much over-parameterization is needed for near optimal recovery.  For now, we prove the following lemma.

  \begin{lemma}
  \label{L:Rlimit}  For any compact subset $K$ of $X$ and any $w\in\R^m$, we have
  \be
  \label{LR1}
  \lim_{\e\to 0^+} R(K(w,\e))_X=R(K_w)_X.
  \ee
  \end{lemma}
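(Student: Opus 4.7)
The plan is to prove the two inequalities between $L:=\lim_{\e\to 0^+}R(K(w,\e))_X$ and $R(K_w)_X$ separately, using only compactness of $K$ and continuity of the observation map $\lambda$.

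First I would observe that the family $\{K(w,\e)\}_{\e>0}$ is nested and increasing in $\e$: if $\e_1\le\e_2$ then $K(w,\e_1)\subset K(w,\e_2)$, so the map $\e\mapsto R(K(w,\e))_X$ is monotone nondecreasing. Moreover, since $K_w\subset K(w,\e)$ for every $\e>0$, we have $R(K_w)_X\le R(K(w,\e))_X$. Monotonicity and this lower bound guarantee that the limit $L$ exists and satisfies $L\ge R(K_w)_X$.

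Next I would show the reverse inequality $L\le R(K_w)_X$ by contradiction. Suppose $L>R(K_w)_X$ and set $\delta:=L-R(K_w)_X>0$. By the definition of the Chebyshev radius, pick $z^*\in X$ with
\[
K_w\subset B\!\left(z^*,\,R(K_w)_X+\tfrac{\delta}{3}\right)_X.
\]
The central claim is that for all sufficiently small $\e>0$,
\[
K(w,\e)\subset B\!\left(z^*,\,R(K_w)_X+\tfrac{2\delta}{3}\right)_X,
\]
which would force $R(K(w,\e))_X\le R(K_w)_X+\tfrac{2\delta}{3}<L$, contradicting the definition of $L$. To prove this claim, assume it fails: then there exist $\e_n\to 0$ and $f_n\in K(w,\e_n)$ with $\|f_n-z^*\|_X>R(K_w)_X+\tfrac{2\delta}{3}$. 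Since $f_n\in K$ and $K$ is compact in $X$, extract a subsequence $f_{n_k}\to f^*\in K$. By the Lipschitz bound \eqref{lambdaLip}, $\|\lambda(f_{n_k})-w\|\le\e_{n_k}\to 0$, so $\lambda(f^*)=w$ and hence $f^*\in K_w$. Continuity of the norm yields $\|f^*-z^*\|_X\ge R(K_w)_X+\tfrac{2\delta}{3}$, contradicting $f^*\in K_w\subset B(z^*,R(K_w)_X+\tfrac{\delta}{3})_X$.

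The main obstacle is the claim in the second paragraph, namely the uniform-in-$\e$ control of how much $K(w,\e)$ can protrude beyond a near-optimal covering ball for $K_w$. Without compactness of $K$ this would fail, since sequences drawn from the inflated sets could escape to infinity or cluster outside $K_w$; the key conceptual point is that compactness of $K$ together with continuity of $\lambda$ forces any sequence $f_n\in K(w,\e_n)$ with $\e_n\to 0$ to subconverge into $K_w$, and this is exactly what rules out sustained inflation of the Chebyshev radius.
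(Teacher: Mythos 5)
Your proof is correct and follows essentially the same route as the paper: monotonicity gives the existence of the limit $L$ and the trivial inequality $L\ge R(K_w)_X$, and the reverse inequality is obtained by contradiction, using compactness of $K$ together with the Lipschitz continuity of $\lambda$ to extract from the inflated sets a subsequence converging to a limit in $K_w$. The one small difference is that you work with a near-optimal center $z^*$ (with radius $R(K_w)_X+\delta/3$) rather than the exact Chebyshev center $z_w$ used in the paper; since the paper itself notes that the Chebyshev radius need not be attained, your variant is marginally more careful, but the underlying argument is identical.
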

  \begin{proof}
   Since the sets $K(w,\e)$, $\e>0$,  are nested, the function $R(K(w,\e))_X$, $\e>0$, is decreasing as $\e$ decreases.
  Hence, the limit in \eref{LR1} exists.  Suppose that this limit is $R_0$ and $R_0>R(K_w)_X$.   Let us fix $\xi>0$ such that $R_0>R_0-\xi>R(K_w)_X$. Since for each $\e>0$ we have $R(K(w,\e))_X\geq R_0$,   there is an   $f_\e\in  K(w,\e)$ with $\|f_\e-z_w\|_X\geq  R_0-\xi$, where $z_w$ is the Chebyshev center of $B(K_w)_X$.   
  { Then we take $\varepsilon_n\rightarrow 0$ and consider the corresponding sequence $f_{\varepsilon_n}\in K(w,\varepsilon_n)$ with $\|f_{\e_n}-z_w\|_X\geq  R_0-\xi$. Since $K$ is compact, this sequence has a subsequence 
which converges to
  a limit $f^*$ in $K$ and } $\|f^*-z_w\|_X\ge  R_0-\xi>R(K_w)$. We also know that  $ \lambda(f^*)=w$ and so $f^*\in K_w$.  This means that  
  $\|f^*-z_w\|_X\le R(K_w)$.  This contradicts { the assumption} $R_0>R(K_w)$
  and proves  \eref{LR1}.
  \end{proof}
  
\begin{remark}
\label{R:genconvergence}
Let us record for further use that for any fixed $\gamma > 0$ and $w\in\R^m$, we have
\be
\label{limit1}
\lim_{\e\to 0^+} R(K(w,\gamma+\e))_X = R(K( w,\gamma))_X.
\ee
This is proved as in Lemma \ref{L:Rlimit} by using the fact that the collection of sets
$K(w,\gamma+\e)$, $\e>0$, is a monotone family.
  
\end{remark}

  The following theorem gives a quantitative bound on the recovery performance of a constructed function $g_\e$ in terms of how closely it fits the data and how close it is to the model class $K$.
   \begin{theorem}
  \label{P:approxcenter}
  If $g_\e$ is any function in $X$ satisfying \eref{morereasonable}, then
  \be
  \label{P:a}
 \|f-g_\e \|_X\le \e +2R(K(w,2\e))_X,\quad f\in K_w.
  \ee
  If $R(K_w)\neq 0$, then for any $C>2$ and for  $\e$   suitably small the function  $g_\e$ is a near best recovery of $f$ with constant $C$.
  \end{theorem}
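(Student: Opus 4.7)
The plan is to construct, from $g_\e$, a nearby element of $K$ that essentially fits the data, and then exploit the fact that both this element and $f$ lie in the inflated set $K(w,2\e)$ so the Chebyshev radius controls their distance.

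First, since $K$ is compact and $\dist(g_\e,K)_X\le\e$, pick $h\in K$ with $\|g_\e-h\|_X\le\e$. Next, use the Lipschitz property \eref{lambdaLip} of $\lambda$ together with \eref{morereasonable} to estimate
\[
\|\lambda(h)-w\|\le\|\lambda(h)-\lambda(g_\e)\|+\|\lambda(g_\e)-w\|\le\|h-g_\e\|_X+\e\le 2\e,
\]
so that $h\in K_{w'}$ for some $w'$ with $\|w'-w\|\le 2\e$; in particular $h\in K(w,2\e)$. Since $f\in K_w\subset K(w,2\e)$, both $f$ and $h$ belong to the same inflated set.

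The second step is to bound $\|f-h\|_X$ by $2R(K(w,2\e))_X$. By the definition \eref{Chebradius} of the Chebyshev radius, for every $\delta>0$ there exists a center $z\in X$ with $K(w,2\e)\subset B(z,R(K(w,2\e))_X+\delta)$, whence $\|f-h\|_X\le 2R(K(w,2\e))_X+2\delta$; letting $\delta\to 0$ gives the diameter bound $\|f-h\|_X\le 2R(K(w,2\e))_X$. Combining this with the triangle inequality yields
\[
\|f-g_\e\|_X\le\|g_\e-h\|_X+\|h-f\|_X\le\e+2R(K(w,2\e))_X,
\]
which is exactly \eref{P:a}.

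For the near optimality claim, write $C=2+\eta$ with $\eta>0$. By Lemma \ref{L:Rlimit}, $R(K(w,2\e))_X\to R(K_w)_X$ as $\e\to 0^+$, and $R(K_w)_X>0$ by assumption. Hence for all sufficiently small $\e$ we have $2R(K(w,2\e))_X\le 2R(K_w)_X+\tfrac{\eta}{2}R(K_w)_X$ and $\e\le\tfrac{\eta}{2}R(K_w)_X$, so \eref{P:a} gives $\|f-g_\e\|_X\le(2+\eta)R(K_w)_X=CR(K_w)_X$. The only subtlety worth flagging is avoiding any assumption that the Chebyshev center is attained; the diameter argument above handles this cleanly and is what makes the whole proof go through without further hypotheses on $X$.
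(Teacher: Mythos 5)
Your proof is correct and follows essentially the same route as the paper's: pick $h\in K$ within $\e$ of $g_\e$, use the Lipschitz bound \eref{lambdaLip} to place $h$ in $K(w,2\e)$ alongside $f$, bound $\|f-h\|_X$ by twice the Chebyshev radius, and invoke Lemma \ref{L:Rlimit} for the near-optimality claim. Your extra limiting argument justifying $\|f-h\|_X\le 2R(K(w,2\e))_X$ without assuming the Chebyshev center is attained is a nice touch of rigor that the paper leaves implicit.
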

  \begin{proof}
  { We know that there is an $h\in K$ such that}
  $\|g_\e-h\|_X\le \e$.  This $h$ satisfies 
   
  $$
  \|\lambda(h)-w\|\leq
   \|\lambda(h)-\lambda(g_\e)\|+\|\lambda(g_\e)-w\|\leq
  2\e,
  $$
{where we have used \eref{lambdaLip}.} Thus, $h\in K_{w'}$ where $\|w-w'\|\le 2\e$.  Hence,   $h$
  is in the set $K(w,2\e)$. Now any $f\in K_w$ is also in $K(w,2\e)$.  This means that $\|f-h \|_X\le 2R(K(w,2\e))_X$.   Since $\|g_\e-h\|_X\le \e$, we obtain \eref{P:a}.  If $C>2$ and $\e>0$ is sufficiently small, then
  $\e+2 R(K(w,2\e))_X$ is smaller than $CR(K_w)_X$ because of Lemma \ref{L:Rlimit}.
  \end{proof}

  \subsection{The behavior of $R(K_w)_X$}
  The analysis that follows in this paper depends on  the function $\e \mapsto R(K(w,\e))_X$ and so it may be useful to the reader to make
  a few remarks on this function.  Its behavior depends very much on $K$ and needs to be analyzed for each $K$ individually. 
   From the above estimates, we see that a  critical issue in quantitative bounds for the performance of learning { procedures} is the rate of convergence  of $R(K(w,\e))_X$ to $R(K_w)_X$ as $\e\to 0^+$.    It is easy to give examples  of compact sets $K$  
  for which $R(K(w,\e))_X$ tends to $R(K_w)_X$  arbitrarily slowly.  
   Concerning the behavior of $R(K_w)_X$, let us note that this may not be a continuous function of $w$.  To illustrate these issues, we consider the following simple example of a compact set in $\R^2$.
  \vskip .1in
  \noindent
  {\bf Example:} {\it We define  the compact set $K:=[0,1]^2 \cup \left( [1,2] \times \{\frac 12\}\right)\subset X=\mathbb R^2$, equipped with the Euclidean norm.  We take the measurement functional $\lambda$  to be the  first coordinate of a point
  $\bx=(x_1,x_2)\in\R^2$: $\lambda(\bx)=x_1$. 
  {
  Thus, we have
  $$
  K_w=\begin{cases}
\{(w,y)\subset \R^2:\,\,y\in [0,1]\}, \quad \quad w\in [0,1],\\
\{(w,\frac{1}{2})\}, \quad\quad\quad\quad\quad\quad\quad\quad\quad w\in [1,2],\\
\emptyset, \quad \quad\quad \quad\quad\quad\quad\quad\quad\quad\quad\quad w\in (-\infty,1)\cup(2, \infty).
\end{cases}
  $$
  }
  \vskip .1in

 \begin{figure}[ht!]
\begin{center}
\begin{tabular}{cc}
\includegraphics[width=0.4\textwidth]{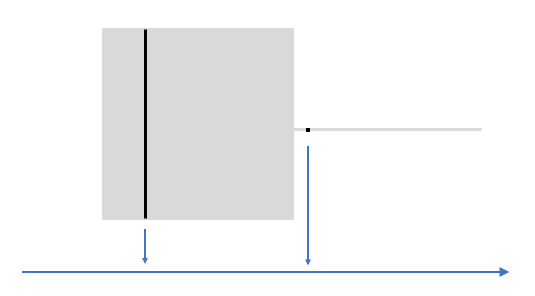}
&
\includegraphics[width=0.4\textwidth]{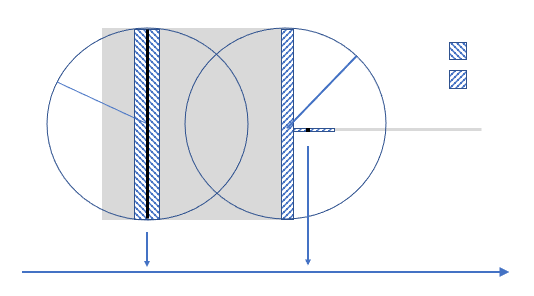} \\
\includegraphics[width=0.3\textwidth]{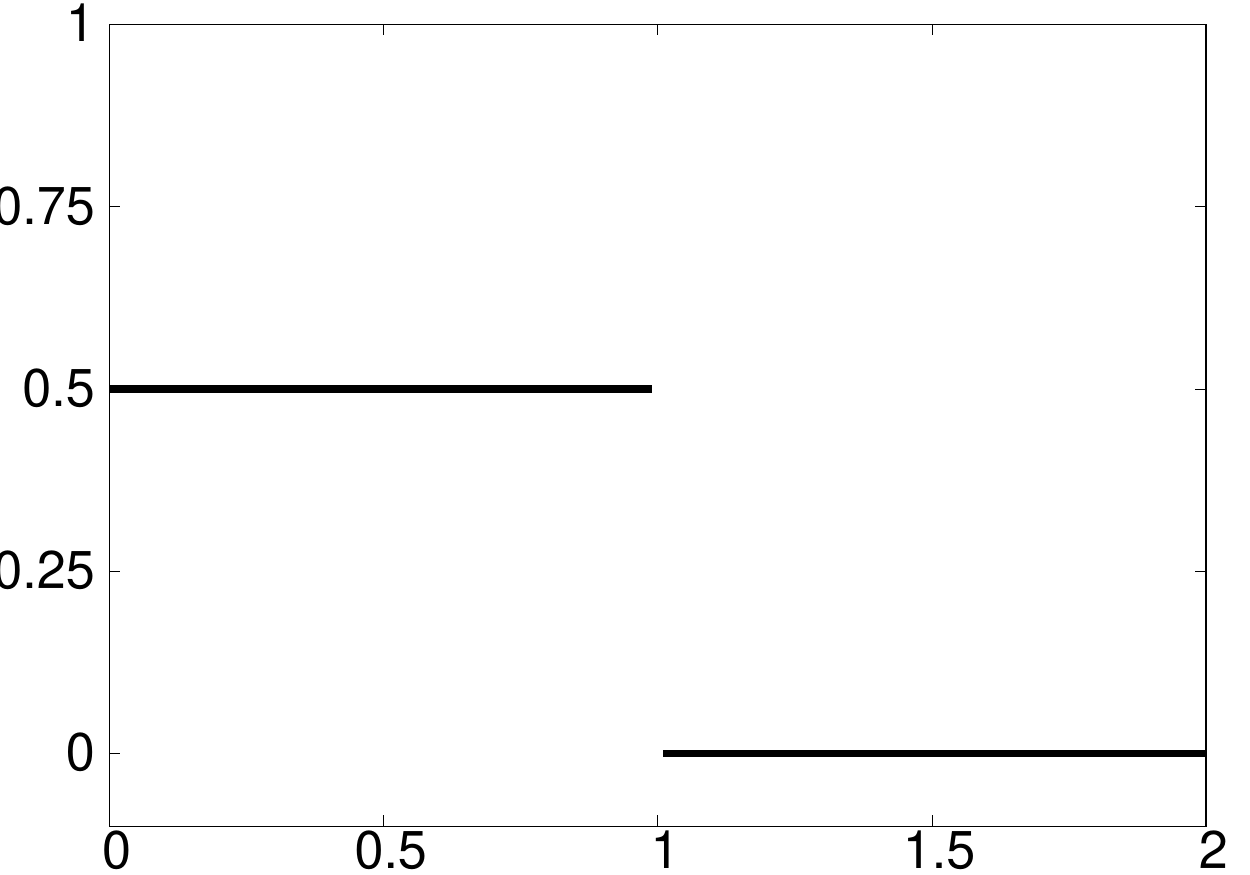}
&
\includegraphics[width=0.15\textwidth]{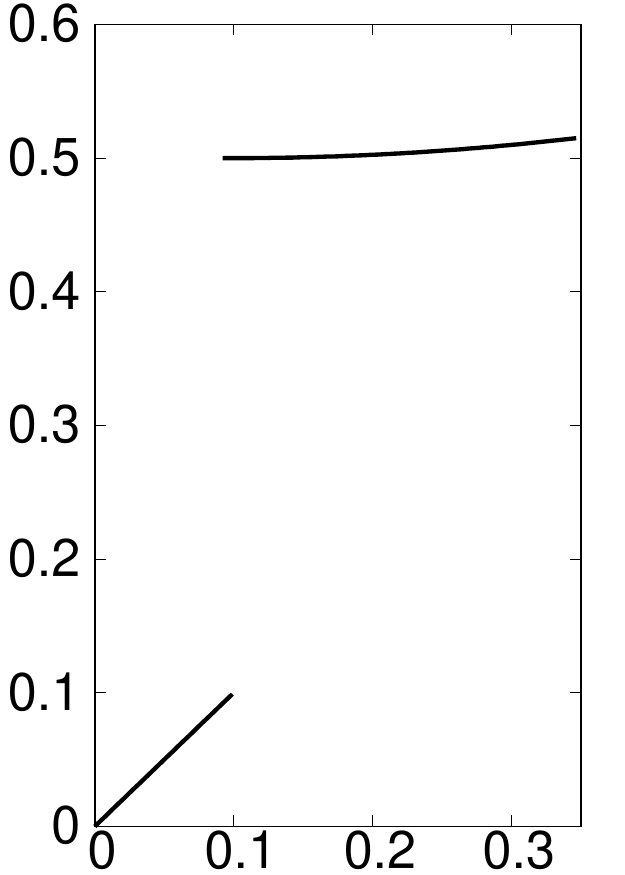}
\end{tabular}
\small
\begin{picture}(0,0)(400,0)
\put(50,5){$\tilde w$}
\put(110,5){$\hat w$}
\put(60,60){$K_{\tilde w}$}
\put(110,70){$K_{\hat w}$}
\put(110,100){$K$}
\put(250,5){$\tilde w$}
\put(305,5){$\hat w$}
\put(318,70){\tiny$R(K^2_\varepsilon)_X$}
\put(220,65){\tiny $R(K^1_\varepsilon)_X$}
\put(370,78){\tiny $K^2_\varepsilon:=K(\hat w,\varepsilon)$}
\put(370,88){\tiny $K^1_\varepsilon:=K(\tilde w,\varepsilon)$}
\put(98,-105){$w$}
\put(120,-20){$R(K_w)_X$}
\put(300,-105){$\varepsilon$}
\put(335,-20){$R(K_\varepsilon^2)_X$}
\end{picture}
\end{center}
\caption{Top-Left: the set $K=[0,1]^2 \cup \left( [1,2] \times \{\frac 12\}\right)$ and $K_w$ for two different measurements, $\tilde w\in (0,1)$ and $\hat w = 1.1 $; Bottom-Left: $R(K_w)_X$ as a function of $w$; Top-Right: the sets $K^1_\varepsilon$ and $K^2_\e$ and  their corresponding Chebyshev balls; Bottom-Right: Graph of $R(K^2_\varepsilon)_X$  as a function of $\varepsilon$.}
\label{fig1}
\end{figure}
}

  For this example the function 
  $R(K_w)_X$ is a discontinuous function of $w$ (see the bottom-left graph
  in Figure \ref{fig1}). 
   Now we consider 
  the sets $K^1_\e:=K(\tilde w,\e)$ 
with $\tilde w\in (0,1)$ and $K^2_\e:=K(\hat w,\e)$ with $\hat w=1.1$, pictured 
  in Figure \ref{fig1} (top-right), 
  along with their corresponding Chebyshev balls.
  The graph of $R(K^2_\epsilon)_X$ as a function of $\e$ is presented at the bottom-right. This function is a discontinuous function of $\e$ with the discontinuity occuring at $\e=0.1$.
   If we move the point $\hat w$ to be closer to $1$, then the jump discontinuity in $R(K(\hat w,\e))_X$ as a function of $\e>0$ will  move
  closer to $0$. The main observation to make here is that the convergence of $R(K(\hat w,\e))_X$, $\e\to 0^+$, towards $R(K_{\hat w})_X=0$ is not uniform in $\hat w$ and depends on the distance of  $\hat w$ to $[0,1]$.

  This example shows that   obtaining quantitative bounds on the  performance  of numerical { procedures} via
  the construction of a $g_\e$  will be very much dependent on the set $K$ and will therefore need its own ad hoc analysis.

   \section{Near optimal recovery through discretization}
  \label{S:nosect} 
  
  The results of the previous section do not constitute a numerical learning { procedure}.  Rather, they only show that  optimal performance can be obtained if 
   an algorithm  provides a function $g_\e$ satisfying condition \eref{morereasonable}.  
    
  In this section, we begin our discussion of  { learning
   procedures} by formulating { finite-dimensional } optimization problems whose successful numerical implementation would yield  a near optimal numerical recovery 
  algorithm.  Thus, the problem of { designing  near optimal  learning
  procedures}  is reduced to questions centering around the convergence of optimization algorithms for the derived { finite-dimensional} optimization problem.

Any numerical { procedure}  for learning is based on some { method of } approximation.  The most common  { tools  used are}  polynomials, splines, wavelets, or neural networks.  Let $\Sigma_n$,
   $n=1,2,\dots,$ be the sets used for the approximation, where $n$ denotes the complexity of $\Sigma_n$.  The two main examples we have in mind
   are  the cases where $\Sigma_n$ is a linear subspace of $X$ of dimension $n$  and  the case where $\Sigma_n$ is  a parametric nonlinear manifold of functions from $X$ depending on
   $Cn$ parameters.  The most common example in the latter case is the nonlinear  manifold consisting of the outputs of a neural network (NN) with $n$ hidden neurons and  some
   specified architecture and activation function (see \cite{DHP} for an overview).  
   
   The first question  we address is how  we  should  use $\Sigma_n$ to build a numerical {procedure}.  The answer depends heavily on the structure of $K$ and is discussed in the sub-sections that follow.

   \subsection{Convex model classes}
   \label{SS:convex}
    We begin   with the most favorable case where
   $K$ is a compact convex centrally symmetric (about the origin) subset of $X$.    Any such set can be written as the unit ball $K=U(Y)$ of a normed linear
   subspace $Y$  of $X$, where the norm on $Y$ is induced by $K$  (see e.g. \cite{yosida}).    Since $K$ is compact, we have the embedding inequality
   \be
   \label{embed}
   \|g\|_X\le C_0\|g\|_Y,\quad g\in Y,
   \ee
   where $C_0$ is the embedding constant which depends only on $K$.
   {In order to simplify the notations, we use the convention $\|g\|_Y:=\infty$ for $g\notin Y$.}
 In this setting, we introduce
   for any $\mu>0$ the loss function 
   \be
   \label{loss1}
   \cL_\mu (g):= \|\lambda(g)-w\|+ \mu \|g\|_Y.
   \ee
   This loss function is  defined for all $g\in X$  { but infinite when $g\not \in Y$}.
   
   {We choose here not to raise the norms involved in the loss function to any powers but consider such variants in \S\ref{S:variants}.
     Formulation \eref{loss1} is also known as the  ``square-root'' formulation of the loss function since the data fidelity term is not squared. It is analogous to the so-called ``square-root LASSO'' decoder in statistics, compressed sensing and machine learning. 
   One of the motivations for introducing the square-root LASSO is that it is agnostic to the noise level, that is, the tuning parameter $\mu$ can be chosen 
independently of the norm of the noise, whereas in the standard LASSO the optimal tuning parameter should depend on the norm of the noise. These  decoders have recently become quite popular in learning problems.  While they were first popularized  in 
high-dimensional statistics (see \cite{Geer}), recently they were studied in the context of  (weighted) sparse recovery in compressed sensing (see \cite{Adcock1,Adcock3, Foucart, PJ}) and deep learning for high-dimensional function approximation (see  \cite{Adcock2}). }

   The following theorem describes a { finite-dimensional }  optimization problem whose solution is a near optimal recovery.
   \begin{theorem} 
   \label{T1:discrete}
    Let $K=U(Y)$ with $Y$ a normed linear subspace of $X$ and let the set 
    $\Sigma\subset X$ satisfy the condition
   \be
   \label{T11} \dist(K,\Sigma\cap K)_X
    < \delta.
   \ee
   Then,  if $f\in K_w$,  the function 
   \be
   \label{min0}
 \hat f:=  \hat f_{\Sigma,\mu} \in \argmin_{g\in \Sigma} \cL_\mu (g)
   \ee
 is a near optimal recovery of $f$, that is, 
   \be
   \label{nearopt}
   \|f-\hat f\|_X\le CR(K_w)_X, 
   \ee
   for any $C>2$,
     provided that
     $R(K_w)_X\neq 0$,  $\delta\le \mu^2$ and $\mu$ is sufficiently small.  More precisely, it is sufficient that 
     \be 
     \label{suffmu}
  \e:=  \mu\max (C_0,1+\mu)
     \ee
     satisfies the inequality
     $$
     \e+2 R(K(w,2\e))_X \le C R(K_w)_X,
     $$ 
which is possible for $\mu>0$
 small enough because of Lemma \ref{L:Rlimit}.
   \end{theorem}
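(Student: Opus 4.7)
The plan is to reduce the theorem to Theorem \ref{P:approxcenter} by showing that the minimizer $\hat f$ satisfies condition \eref{morereasonable} with parameter $\e = \mu\max(C_0,1+\mu)$, after which Lemma \ref{L:Rlimit} handles the limit as $\mu\to 0^+$.

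\textbf{Step 1 (comparison against a benchmark in $\Sigma\cap K$).} Since $f\in K$ and $\dist(K,\Sigma\cap K)_X<\delta$, I pick a benchmark $g^*\in\Sigma\cap K$ with $\|f-g^*\|_X<\delta$. Because $\lambda$ is Lipschitz with constant $1$ (as noted in \eref{lambdaLip}) and $\lambda(f)=w$, this yields $\|\lambda(g^*)-w\|\le \|g^*-f\|_X<\delta$. Since $g^*\in K=U(Y)$, we also have $\|g^*\|_Y\le 1$, hence $\cL_\mu(g^*)\le \delta+\mu$.

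\textbf{Step 2 (consequences of minimality).} By the definition of $\hat f$, we get $\cL_\mu(\hat f)\le \cL_\mu(g^*)\le \delta+\mu$. Splitting the two nonnegative terms in $\cL_\mu$ gives the two separate bounds
\[
\|\lambda(\hat f)-w\|\le \delta+\mu,\qquad \|\hat f\|_Y\le 1+\tfrac{\delta}{\mu}.
\]
Under the hypothesis $\delta\le \mu^2$, both inequalities simplify: $\|\lambda(\hat f)-w\|\le \mu(1+\mu)$ and $\|\hat f\|_Y\le 1+\mu$.

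\textbf{Step 3 (distance from $\hat f$ to $K$).} If $\|\hat f\|_Y\le 1$ then $\hat f\in K$ and the distance is zero. Otherwise, the rescaled element $\hat f/\|\hat f\|_Y$ lies in $K=U(Y)$, and the embedding inequality \eref{embed} gives
\[
\dist(\hat f,K)_X\le \bigl\|\hat f-\hat f/\|\hat f\|_Y\bigr\|_X = \bigl(\|\hat f\|_Y-1\bigr)\cdot \frac{\|\hat f\|_X}{\|\hat f\|_Y}\le C_0\bigl(\|\hat f\|_Y-1\bigr)\le C_0\mu.
\]

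\textbf{Step 4 (applying the preliminary result and passing to the limit).} Setting $\e:=\mu\max(C_0,1+\mu)$, Steps 2 and 3 show that $\hat f$ satisfies \eref{morereasonable} with this $\e$. Theorem \ref{P:approxcenter} then yields $\|f-\hat f\|_X\le \e+2R(K(w,2\e))_X$ for every $f\in K_w$. As $\mu\to 0^+$ we have $\e\to 0^+$, and by Lemma \ref{L:Rlimit} $R(K(w,2\e))_X\to R(K_w)_X$, so for any $C>2$ we can pick $\mu>0$ small enough (with $\delta\le \mu^2$) that $\e+2R(K(w,2\e))_X\le CR(K_w)_X$, using $R(K_w)_X>0$.

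The only mildly delicate point is the scaling argument in Step 3 that converts a bound on $\|\hat f\|_Y$ into a bound on $\dist(\hat f,K)_X$; everything else is a direct manipulation of the loss, followed by a clean application of the already-proved Theorem \ref{P:approxcenter} and Lemma \ref{L:Rlimit}. The coupling $\delta\le \mu^2$ is precisely what guarantees that the data-fidelity term contributes $O(\mu)$ rather than blowing up, matching the $O(\mu)$ contribution from the penalty term.
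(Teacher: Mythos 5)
Your proof is correct and follows essentially the same route as the paper's: compare $\cL_\mu(\hat f)$ against a benchmark $f_\Sigma\in\Sigma\cap K$ to get $\|\lambda(\hat f)-w\|\le\mu(1+\mu)$ and $\|\hat f\|_Y\le 1+\mu$, rescale to bound $\dist(\hat f,K)_X\le C_0\mu$, and conclude via Theorem \ref{P:approxcenter} and Lemma \ref{L:Rlimit}. The only (immaterial) difference is that you rescale by $\|\hat f\|_Y^{-1}$ where the paper rescales by $(1+\mu)^{-1}$; both yield the same bound.
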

   \begin{proof} 
   Let $f$ be any function in $K=U(Y)$ which satisfies the data, i.e., $f$ is in $K_w$, and let { $f_\Sigma \in \Sigma \cap K$ satisfy $\|f-f_\Sigma\|_X\le \delta$}. 
   From the definition of $\hat f$, we know that
  \be
  \label{minprop1} 
  \|w-\lambda( \hat f)\|+\mu \|\hat f\|_Y \le  \|w-\lambda({ f_\Sigma})\|+\mu \|{ f_\Sigma}\|_Y\le \delta+\mu,
  \ee
  where the first term was estimated by 
  $$\|w-\lambda({ f_\Sigma})\|= \|\lambda(f-{ f_\Sigma})\|\le \|f-{ f_\Sigma}\|_X\le\delta,
  $$
  and the second term uses that ${ f_\Sigma}\in K$ so that $\|{ f_\Sigma}\|_Y\le 1$.
  
  We now assume that  $\delta\le  \mu^2$ and $\mu$ is small. We see from \eref{minprop1} that $\hat f$ almost satisfies the data since
   $$\|w-\lambda(\hat f)\|\le \delta+\mu\le \mu(1+\mu).$$
    Also,  $\hat f$ is close to $K$ since \eref{minprop1} shows that  $\|\hat f\|_Y\le 1+\mu$. Therefore  
     $(1+\mu)^{-1} \hat f\in K$, and 
    from \eref{embed} we have
   $$ 
    \dist(\hat f,K)_X\leq
   \|\hat f - (1+\mu)^{-1} \hat f\|_X = \frac{\mu}{1+\mu}\|\hat f\|_X \le \frac{\mu}{1+\mu} C_0\|\hat f\|_Y\le \mu C_0.
   $$
      In other words, 
$g_\e:=\hat f$ satisfies \eref{morereasonable} for $\e:= \mu\max (C_0,1+\mu)$.   
  Theorem \ref{P:approxcenter} shows that   for any $C>2$, the function $\hat f$ is a near optimal recovery with constant $C$, provided
  $\mu$ (and hence $\delta$) is sufficiently small.  The last statement of the theorem follows from Theorem \ref{P:approxcenter}.
  \end{proof}
  
  \begin{remark}
  \label{R:numerror}
  In practice, numerical optimizers may not find a global minimizer $\hat f$ in \eqref{min0}. Rather, they are more likely to produce $\tilde f \in \Sigma$ such that
  $$
  \mathcal L_\mu (\tilde f) \leq \mathcal L_\mu(\hat f) + \tilde \e
  $$
  for some $\tilde \e>0$.
  In this case, the conclusion of Theorem~\ref{T1:discrete} remains valid provided $\tilde \e  $ is sufficiently small; namely $\tilde \e\le \delta\le \frac12\mu^2$.
 Indeed,  the estimate   \eqref{minprop1} gives
 $$
 \|w-\lambda( \tilde f)\|+ \mu  \|\tilde f\|_Y \le
 \delta+\mu +\tilde \e\le 2\delta+\mu
  \leq \mu(1+\mu),
 $$
  and the proof is completed as in the theorem.
   \end{remark}
   
    \begin{remark}
  Notice that if $\Sigma$ is convex (e.g. if $\Sigma$  is a linear space) and if $\|\cdot\|_Y$ is strictly convex, then the minimizer of $\mathcal L_\mu$ over $\Sigma$ is unique because
  $\mathcal L_\mu$ is always strictly convex on $\Sigma$.
  
    \end{remark}

  \subsection{General model classes}
  \label{SS:generalmodel}
  We next want to give a discretization problem whose solution is near optimal for any model class $K$, i.e., for  any compact set $K\subset X$.
  For this, we introduce the loss function
   \be
   \label{loss2}
   \cL_K (g):=  \|\lambda(g)-w\|+ \dist(g,K)_X, \quad  g\in X.
   \ee
 The following theorem holds.
  
  \begin{theorem} 
   \label{T2:discrete}
    Let $K$ be any compact subset of $X$ and let the set  $\Sigma\subset X$ satisfy the condition
   \be
   \label{T21} \dist(K,\Sigma)_X < \delta.
   \ee
   Then, if $f\in K_w$,   the function 
   \be
   \label{min1}
  \hat f \in \argmin_{g\in \Sigma}\cL_K(g)
   \ee
    is a near optimal recovery of 
    $f$, that is, we have for any $C>2$
   \be
   \label{nearoptimal1}
   \|f-\hat f\|_X\le CR(K_w)_X,
   \ee
    provided that $R(K_w)_X\neq 0$ and $\delta$ is sufficiently small.  More precisely, it is sufficient that 
   $$
   2\delta +2R(K(w,4\delta))_X \le CR(K_w)_X,
   $$
   which is possible because of Lemma \ref{L:Rlimit}.
   \end{theorem}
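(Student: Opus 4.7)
The plan is to follow exactly the same two-step strategy used for Theorem \ref{T1:discrete}, but replacing the $Y$-norm penalty by the distance-to-$K$ penalty. The key idea is that the loss $\mathcal{L}_K$ directly measures the two defects appearing in the hypothesis \eref{morereasonable} of Theorem \ref{P:approxcenter}, so it should suffice to bound $\mathcal{L}_K(\hat f)$ and then invoke Theorem \ref{P:approxcenter}.

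First I would pick any $f\in K_w$ and use the discretization hypothesis \eref{T21} to produce a comparison element $f_\Sigma\in\Sigma$ with $\|f-f_\Sigma\|_X\le\delta$. Evaluating the loss at this comparison element gives the two bounds
\[
\|\lambda(f_\Sigma)-w\|=\|\lambda(f_\Sigma-f)\|\le\|f_\Sigma-f\|_X\le\delta,\qquad \dist(f_\Sigma,K)_X\le\|f_\Sigma-f\|_X\le\delta,
\]
where the first inequality uses \eref{lambdaLip}. Hence $\mathcal{L}_K(f_\Sigma)\le 2\delta$, and by the minimizing property of $\hat f$ we obtain
\[
\|\lambda(\hat f)-w\|+\dist(\hat f,K)_X \;=\; \mathcal{L}_K(\hat f)\;\le\;\mathcal{L}_K(f_\Sigma)\;\le\;2\delta.
\]
In particular, $\hat f$ satisfies both inequalities in \eref{morereasonable} with $\e:=2\delta$.

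Next I would apply Theorem \ref{P:approxcenter} with this value of $\e$ to conclude
\[
\|f-\hat f\|_X\;\le\; 2\delta + 2R(K(w,4\delta))_X,\qquad f\in K_w.
\]
Finally, given any $C>2$ and assuming $R(K_w)_X\neq 0$, I would invoke Lemma \ref{L:Rlimit}, which guarantees $R(K(w,4\delta))_X\to R(K_w)_X$ as $\delta\to 0^+$, to choose $\delta$ small enough that $2\delta+2R(K(w,4\delta))_X\le CR(K_w)_X$. This yields \eref{nearoptimal1}.

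There is no real obstacle here: the argument is strictly simpler than the one for Theorem \ref{T1:discrete} because the penalty is already the quantity $\dist(g,K)_X$ that Theorem \ref{P:approxcenter} requires, so no embedding constant $C_0$ and no rescaling of $\hat f$ back into $K$ is needed. The only subtlety worth flagging is that, just as in Remark \ref{R:numerror}, if a numerical optimizer only returns a near-minimizer $\tilde f$ with $\mathcal{L}_K(\tilde f)\le\mathcal{L}_K(\hat f)+\tilde\e$, the conclusion still holds with $\e:=2\delta+\tilde\e$ in the application of Theorem \ref{P:approxcenter}, provided $\tilde\e$ is small relative to $\delta$.
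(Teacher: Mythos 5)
Your proposal is correct and follows essentially the same argument as the paper: compare $\hat f$ against an element $f_\Sigma\in\Sigma$ within $\delta$ of $f$, deduce $\cL_K(\hat f)\le 2\delta$ so that \eref{morereasonable} holds with $\e=2\delta$, and then apply Theorem \ref{P:approxcenter} together with Lemma \ref{L:Rlimit}. The remark about near-minimizers likewise matches the paper's Remark \ref{R:numerror1}.
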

   \begin{proof} 
   Let $f$ be any function in $K$ which satisfies the data, i.e., $f$ is in $K_w$, and let { $f_\Sigma \in \Sigma$ satisfy $\|f-f_\Sigma\|_X\le \delta$}. 
   From the definition of $\hat f$, we know that
  \be
  \label{minprop} 
  \|w-\lambda( \hat f)\|+ \dist(\hat f,K)_X\le  \|w-\lambda({ f_\Sigma})\|+ \dist({ f_\Sigma},K)_X\le \delta+ \delta=2\delta,
  \ee
  where the first term was estimated by 
  $$\|w-\lambda({ f_\Sigma})\|= \|\lambda(f-{ f_\Sigma})\|\le \|f-{ f_\Sigma}\|_X\le\delta.
  $$
   It follows that the function $g_\e:=\hat f$ satisfies \eref{morereasonable} with $\e=2\delta$.  From Theorem \ref{P:approxcenter},
  we find that
   \be
   \label{T22}
   \|f-\hat f\|_X\le 2\delta +2R(K(w,4\delta))_X,\quad f\in K_w.
   \ee
   If  $C>2$ and   $\delta$ is sufficiently small,   Lemma \ref{L:Rlimit} gives that the right side of \eref{T22} does not exceed $CR(K_w)_X$.
   \end{proof}
   
   The difference between the case of a general model class $K$ and the special case where $K$ is convex and centrally symmetric is
   in the form of the penalty term in the loss function.  Ostensibly, the penalty in the general case would be more difficult to numerically implement.  Also note that the penalty term does not require a parameter to balance it with the data fitting term.  This is because we simply want both terms to be small simultaneously.
     
     \begin{remark}
     \label{R1}
     Let us emphasize that the results in this section do not yet give a numerical { procedure}  for near optimal recovery since
     we have not given a numerical recipe for solving the corresponding { finite-dimensional } problem. This is discussed in more details in \S\ref{S:examples}.  
     \end{remark}

  \begin{remark}
  \label{R:numerror1}
   Similarly to Remark \ref{R:numerror}, if we only approximately solve the minimization problem we still obtain a near optimal recovery provided
   the numerical error is small enough.
  \end{remark}

  \section{The special case of point values and recovery in $L_p$}
  \label{S:ptvalues}  
   We turn next to what is the most common setting in machine learning where the data comes from point evaluations.  We assume that $f$ is a function defined on  $\Omega\subset \R^d$, $d\ge 1$, where $\Omega$ is the closure of a bounded domain in $\R^d$.   For the moment, we assume  that we have noiseless observations
   \be
   \label{data1}
   w_i=f(x_i), \quad { x_i\in \Omega\subset \R^d},\quad i=1,\dots,m,
   \ee
   where the {\it data sites} $x_i$ come from $\Omega$.  The most common choice of the metric in which to measure recovery error is 
   an $X=L_p(\Omega)$ norm on $\Omega$ and $1\le p\le\infty$.    Since point evaluation is not a linear functional on $X$, we cannot apply the results of
   the previous section.  Note however, that to define  $\cL_\mu$ or $\cL_K$, it is enough to have point evaluation well defined for
   functions in the model class $K$ and functions in $\Sigma$. 
   
   In order to guarantee that point evaluation is well defined for functions in $K$, we  make the following assumption on the model class $K$.

   \vskip .1in

  \noindent
  {\bf Main Assumption:} {\it We assume the model class $K$ is a compact subset of $C(\Omega)$.  So, in particular,  all functions in $K$  have well defined point values.}
   
  \vskip .1in
  \noindent 
 Even though we impose this assumption on $K$, we continue to measure the performance of the learning {\nnew procedure}  in a Banach space  $X$ for which  we have an embedding
 \be 
 \label{Cembed}
 \|f\|_X\le C_X\|f\|_{C(\Omega)}.
 \ee 
   
   Let $K$ be a model class satisfying our {\bf Main Assumption}.  As in the previous section, we shall consider two settings depending on whether $K$ is convex and centrally symmetric, or $K$ is a general compact set.  The results we give in this section are similar to those of the preceding section
   with the modifications necessary to handle the new setting of point evaluation.  
   
   As before, let $K_w$ be the set of all $f\in K$ which satisfy the given measurements, i.e. \eref{data1}.   As in the previous section,  the optimal recovery for a model class $K$ with the data \eref{data1} is given by the Chebyshev center of $K_w$  and the optimal error is the  Chebyshev radius $R(K_w)_X$.  This radius will depend on $X$.
         
      Now, let us see what the previous section says since our setting is slightly different.     As before, we let
   \be
  \label{Rwe1}
  K(w,\e):= \bigcup_{\|w'-w\|\le \e} K_{w'},\quad 
  \ee
  and let $ R(K(w,\e))_X$ be the  Chebyshev radius of this inflated set in $X$.
  
   The following are the analogues of  Theorem \ref{P:approxcenter} and  Lemma \ref{L:Rlimit}. 
    We use the notation
   \be 
   \label{defpt}
   \lambda_{\bf x}(g):=(g(x_1), \ldots,g(x_m))\in \R^m, \quad {\bf x}:=(x_1,\ldots, x_m),\quad {\nnew
   x_i\in  \Omega\subset \R^d}, \quad g\in C(\Omega),
   \ee 
   when discussing point evaluation data.
   Note that when $g,h\in C(\Omega)$, it follows from definition \eref{l2} that
   $$
   \|\lambda_{\bf x}(g)-\lambda_{\bf x}(h)\|\leq \|h-g\|_{C(\Omega)}.
   $$

    \begin{prop}
  \label{P:approxcenter2}
  Let $X$ satisfy the embedding \eref{Cembed} and let $K$ satisfy our {\bf Main Assumption}.  If 
  $f\in K_w$, where  $w=\lambda_{\bf x}(f)$, and $g_\e$ is any function in $C(\Omega)$ with $\|w- \lambda_{\bf x}(g_\e)\| \le \e$ and $\dist(g_\e,K)_{C(\Omega)}\le \e$, then
  \be
  \label{P:a2}
 \|f-g_\e\|_X\le 
  C_X\e+ 2R(K(w,2\e))_X.
  \ee
  \end{prop}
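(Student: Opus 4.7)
The plan is to mimic the proof of Theorem~\ref{P:approxcenter}, keeping careful track of which norm is used in each step: the hypothesis gives closeness to $K$ in the $C(\Omega)$ norm (which is the natural space in which point evaluation is continuous), while the conclusion is measured in $X$, so the embedding constant $C_X$ from \eqref{Cembed} will appear exactly where we pass between the two.

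First I would use the compactness of $K$ in $C(\Omega)$ together with the hypothesis $\dist(g_\e,K)_{C(\Omega)}\le\e$ to produce an actual $h\in K$ with $\|g_\e-h\|_{C(\Omega)}\le\e$ (compactness guarantees the distance is attained, though for the argument it suffices to have this up to an arbitrarily small extra tolerance which can then be absorbed in $\e$). Next I would bound the data mismatch of $h$: using the point-evaluation Lipschitz estimate $\|\lambda_{\bf x}(h)-\lambda_{\bf x}(g_\e)\|\le\|h-g_\e\|_{C(\Omega)}$ noted just before the statement, together with the hypothesis $\|w-\lambda_{\bf x}(g_\e)\|\le\e$, the triangle inequality yields
\begin{equation*}
\|\lambda_{\bf x}(h)-w\|\le\|\lambda_{\bf x}(h)-\lambda_{\bf x}(g_\e)\|+\|\lambda_{\bf x}(g_\e)-w\|\le 2\e,
\end{equation*}
so $h$ belongs to $K_{w'}$ for some $w'$ with $\|w'-w\|\le 2\e$, i.e. $h\in K(w,2\e)$.

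Since $f\in K_w\subset K(w,2\e)$ as well, both $f$ and $h$ lie in the inflated set, and the definition of the Chebyshev radius gives $\|f-h\|_X\le 2R(K(w,2\e))_X$. Finally I would convert the $C(\Omega)$ closeness between $g_\e$ and $h$ to an $X$ estimate via the embedding \eqref{Cembed}, obtaining $\|g_\e-h\|_X\le C_X\|g_\e-h\|_{C(\Omega)}\le C_X\e$. Combining these via a single triangle inequality
\begin{equation*}
\|f-g_\e\|_X\le\|f-h\|_X+\|h-g_\e\|_X\le 2R(K(w,2\e))_X+C_X\e
\end{equation*}
yields \eqref{P:a2}.

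There is no real obstacle; the only point requiring attention is the need to measure $\dist(g_\e,K)$ in $C(\Omega)$ rather than $X$, since the whole argument passes through the inequality $\|\lambda_{\bf x}(h)-\lambda_{\bf x}(g_\e)\|\le\|h-g_\e\|_{C(\Omega)}$ in order to place $h$ inside $K(w,2\e)$. The cost of this choice is exactly the factor $C_X$ in front of $\e$ in the final bound, which is the only place where the proof differs quantitatively from that of Theorem~\ref{P:approxcenter}.
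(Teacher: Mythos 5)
Your proof is correct and follows exactly the route the paper takes: the paper's own proof simply says it is the same as that of Theorem \ref{P:approxcenter} after choosing $h\in K$ with $\|g_\e-h\|_{C(\Omega)}\le\e$, which is precisely your argument, including the observation that the embedding \eref{Cembed} is what produces the factor $C_X$ in front of $\e$. Your closing remark about why the distance to $K$ must be measured in $C(\Omega)$ rather than $X$ matches the point the paper makes in Remark \ref{R:counter}.
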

  \begin{proof}  
  The proof is the same as that of Theorem \ref{P:approxcenter}  after choosing $h$ in $K$ to satisfy the inequality $\|g_\e-h\|_{C(\Omega)}\le\e$.
  \end{proof}
  
  \begin{remark}
  \label{R:counter}
  In the last proposition, it is natural to ask why we need $g_\e$ close to
  $K$ in the norm of $C(\Omega)$ and not just in the norm of $X$.
  The following example clarifies that issue.  Let $\Omega=[0,1]$ and $X=L_2[0,1]$. Consider a set $\bx$ of data sites. Let $K=\{f_1,f_2\}$ where $f_1 \equiv 1$ and
  $f_2 \equiv 0$.  Let $w=(1,1,\dots,1)$ which is data satisfied only by $f=f_1$.
  If $g_\e$ is one at the data and $\|g_\e\|_{L_2[0,1]}<\e$, then 
  $g_\e$ satisfies the data and is close to $K$ in the $X$ norm.  However,
  the left side of \eref{P:a2} is close to $1$ and the right side
  is close to $0$, so the Proposition using distance in $X$ is not valid. 
  \end{remark}

     \begin{lemma}
  \label{L:Rlimit2}  If $X $ satisfies \eref{Cembed}  and   $K$ satisfies our {\bf Main Assumption}, then,  we have
  \be
  \label{LR2}
  \lim_{\e\to 0^+} R(K(w,\e))_X=R(K_w)_X.
  \ee
  \end{lemma}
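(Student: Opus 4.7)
The plan is to mirror the proof of Lemma \ref{L:Rlimit}, adapting the compactness step so that the data functionals (now point evaluations) are preserved in the limit. As before, the family $K(w,\e)$ is nested and monotone in $\e$, so $R(K(w,\e))_X$ is a nonincreasing function of $\e$ that is bounded below by $R(K_w)_X$. Hence the limit $R_0 := \lim_{\e \to 0^+} R(K(w,\e))_X$ exists and satisfies $R_0 \geq R(K_w)_X$. It suffices to rule out strict inequality.

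Suppose for contradiction that $R_0 > R(K_w)_X$, and fix $\xi > 0$ with $R_0 - \xi > R(K_w)_X$. Let $z_w \in X$ be a (near) Chebyshev center of $K_w$ in $X$; in particular, $\sup_{g\in K_w}\|g-z_w\|_X \leq R(K_w)_X$ (or $R(K_w)_X + \eta$ for any chosen $\eta>0$, which we can make smaller than $\xi$). For each $\e > 0$, since $R(K(w,\e))_X \geq R_0$, there exists $f_\e \in K(w,\e)$ with $\|f_\e - z_w\|_X \geq R_0 - \xi$. Note that $f_\e \in K$ by the very definition of $K(w,\e)$, and by construction $\|\lambda_{\bf x}(f_\e) - w\| \leq \e$.

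The key adaptation to the point-evaluation setting enters here. Take $\e_n \to 0^+$. Because $K$ is compact in $C(\Omega)$ by the \textbf{Main Assumption}, the sequence $(f_{\e_n})$ admits a subsequence (not relabeled) converging to some $f^* \in K$ in $C(\Omega)$. Using the embedding \eqref{Cembed}, this subsequence also converges to $f^*$ in $X$, so the lower bound $\|f_{\e_n} - z_w\|_X \geq R_0 - \xi$ passes to the limit to yield $\|f^* - z_w\|_X \geq R_0 - \xi > R(K_w)_X$. On the other hand, convergence in $C(\Omega)$ forces convergence of point values: $\|\lambda_{\bf x}(f_{\e_n}) - \lambda_{\bf x}(f^*)\| \leq \|f_{\e_n}-f^*\|_{C(\Omega)} \to 0$. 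Combining with $\|\lambda_{\bf x}(f_{\e_n}) - w\| \leq \e_n \to 0$ gives $\lambda_{\bf x}(f^*) = w$, hence $f^* \in K_w$ and therefore $\|f^* - z_w\|_X \leq R(K_w)_X$, a contradiction.

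The main (minor) obstacle compared with Lemma \ref{L:Rlimit} is precisely this point: since the data functionals are point evaluations rather than continuous linear functionals on $X$, convergence in $X$ alone would not guarantee that $\lambda_{\bf x}(f^*) = w$. The \textbf{Main Assumption} that $K$ is compact in $C(\Omega)$ is exactly what is needed to extract a subsequence for which the point values pass to the limit, and the embedding \eqref{Cembed} then allows the $X$-norm lower bound to be preserved so the contradiction can be closed.
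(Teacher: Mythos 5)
Your proof is correct and follows essentially the same route as the paper's: monotonicity gives the limit $R_0\ge R(K_w)_X$, compactness of $K$ in $C(\Omega)$ extracts a subsequence converging in $C(\Omega)$ (hence in $X$ by \eref{Cembed}), and convergence of point values places the limit in $K_w$, yielding the contradiction. The point you flag as the key adaptation --- that $C(\Omega)$-convergence, not mere $X$-convergence, is what forces $\lambda_{\bf x}(f^*)=w$ --- is precisely the ``small difference'' from Lemma \ref{L:Rlimit} that the paper's proof highlights.
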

  \begin{proof} 
  The proof is similar to that of Lemma \ref{L:Rlimit} and so we only indicate the small differences. The  limit $R_0$ on the left in \eref{LR2} exists from monotonicity.  If $R_0>R(K_w)_X$, then  we fix $\xi>0$ such that 
$R_0>R_0-\xi>R(K_w)_X$ and because of the compactness of $K$ in $C(\Omega)$ there is a sequence $\e_n\to 0^+$ and a sequence $f_n\in K(w,\e_n)_X$  that converges to 
  a limit $f^*$ in $K$ in   the $C(\Omega)$ norm,  and because of \eref{Cembed}, in the $X$ norm.  Then, $\|f^*-z_w\|_{X}\ge  R_0-\xi>R(K_w)$.   From the convergence in $C(\Omega)$, it follows that $f^*$ is in $K_w$ and so we have  $\|f^*-z_w\|_{X}\le R(K_w)_X$, which contradicts that
  $\|f^*-z_w\|_{X}>R(K_w)$.  
  \end{proof}
  
  \bigskip
   Let us now  assume that $K=U(Y)$, where $Y$ is a subspace of $C(\Omega)$ equipped  with a norm $\|\cdot\|_Y$.  Typical examples for $Y$ are smoothness spaces: Lipschitz, Sobolev, Besov spaces.  The {\bf Main Assumption} is simply requiring that $Y$ compactly embeds into $C(\Omega)$. Therefore, we know that   
   $$
   \|f\|_{C(\Omega)}\leq C_Y, 
  \quad f\in K.
   $$
   We shall use this inequality as we proceed without 
   mentioning it.  Such embeddings typically follow from Sobolev embedding theorems.

   The following theorem describes a {\nnew finite-dimensional }  optimization problem whose solution is a near optimal recovery.  In the statement of this theorem we use the loss function $\cL_\mu$ as given in \eref{loss1} using point evaluation 
   functionals  (see \eref{defpt}).
   
   \begin{theorem} 
   \label{T3:discrete}
    Let $K=U(Y)$ with $Y$ a normed linear subspace of $C(\Omega)$  satisfy the {\bf Main Assumption}, let $X $ satisfy the embedding \eref{Cembed}, and let the set $\Sigma$ satisfy the condition
   \be
   \label{T31} \dist(K,\Sigma\cap K)_{C(\Omega)} < \delta.
   \ee
   If $f\in K_w$, where $w=\lambda_{\bf x}(f)$, 
  then the function 
   \be
   \label{min3}
  \hat f:=\hat f_{\Sigma,\mu} \in \argmin_{g\in \Sigma} \cL_\mu (g), \quad 
    \hbox{where}\quad \cL_\mu(g):=\|\lambda_{\bf x}(g)-w\|+\mu\|g\|_Y,
   \ee
    is a near optimal recovery 
    of $f$, that is,
   \be
   \label{optimal0}
   \|f-\hat f\|_X \le  CR(K_w)_X,
   \ee
  for any $C>2$,  
      provided that
       $R(K_w)_X\neq 0$, and $\delta$ and $\mu$ are sufficiently small.
      More precisely, it is enough to choose $\delta\le  \mu^2$
      and $\mu$ small enough so that   $C_X\e+2R(K(w,2\e))_X\le CR(K_w)_X$ with $\e:=\mu\max(\mu+1,C_Y)$.
      \end{theorem}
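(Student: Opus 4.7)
The plan is to mimic the proof of Theorem \ref{T1:discrete}, but with all approximation estimates measured in $C(\Omega)$ rather than in $X$, so that the hypotheses of Proposition \ref{P:approxcenter2} can be invoked at the end. The three ingredients I need are: (i) a good competitor in $\Sigma$ for $\hat f$ in the minimization, (ii) a bound on the data misfit of $\hat f$, and (iii) a bound on the $C(\Omega)$-distance from $\hat f$ to $K$.

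First I would pick $f_\Sigma \in \Sigma \cap K$ with $\|f-f_\Sigma\|_{C(\Omega)}\le \delta$, which exists by \eqref{T31}. Since point evaluation is $1$-Lipschitz on $C(\Omega)$ in the normalization \eqref{l2}, this gives $\|w-\lambda_{\bf x}(f_\Sigma)\| = \|\lambda_{\bf x}(f-f_\Sigma)\|\le \|f-f_\Sigma\|_{C(\Omega)}\le \delta$. Since $f_\Sigma\in K=U(Y)$, we also have $\|f_\Sigma\|_Y\le 1$, so that $\cL_\mu(f_\Sigma)\le \delta+\mu$. The minimizing property of $\hat f$ then yields
\begin{equation*}
\|w-\lambda_{\bf x}(\hat f)\|+\mu\|\hat f\|_Y \le \delta+\mu.
\end{equation*}

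Under the assumption $\delta\le \mu^2$, I would read off from this inequality the two bounds $\|w-\lambda_{\bf x}(\hat f)\|\le \mu(1+\mu)$ and $\|\hat f\|_Y\le 1+\mu$. The second bound gives $(1+\mu)^{-1}\hat f \in K$, and because $K$ embeds in $C(\Omega)$ with the bound $\|g\|_{C(\Omega)}\le C_Y$ for $g\in K$, this implies $\|\hat f\|_{C(\Omega)}\le (1+\mu)C_Y$. Therefore
\begin{equation*}
\dist(\hat f,K)_{C(\Omega)} \le \|\hat f - (1+\mu)^{-1}\hat f\|_{C(\Omega)} = \frac{\mu}{1+\mu}\|\hat f\|_{C(\Omega)} \le \mu C_Y.
\end{equation*}
Setting $\e := \mu\max(1+\mu, C_Y)$, both $\|w-\lambda_{\bf x}(\hat f)\|$ and $\dist(\hat f,K)_{C(\Omega)}$ are at most $\e$, so $g_\e:=\hat f$ meets the hypotheses of Proposition \ref{P:approxcenter2}. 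Applying that proposition yields
\begin{equation*}
\|f-\hat f\|_X \le C_X\e + 2R(K(w,2\e))_X,
\end{equation*}
and the quantitative assumption on $\mu$ in the theorem, combined with Lemma \ref{L:Rlimit2}, makes the right-hand side at most $CR(K_w)_X$ for any $C>2$.

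The only subtle point, which is really the content of Remark \ref{R:counter}, is that the approximation $\dist(K,\Sigma\cap K)$ and the resulting bound on $\dist(\hat f,K)$ must both be in the $C(\Omega)$ norm rather than in the $X$ norm; getting an $X$-distance bound would not be enough to control the perturbation of point values in Proposition \ref{P:approxcenter2}. Everything else follows the pattern of Theorem \ref{T1:discrete} verbatim, with the $C(\Omega)$ embedding constant $C_Y$ replacing the $X$ embedding constant $C_0$ and an extra factor $C_X$ appearing because the final error is measured in $X$ while the approximation-to-$K$ estimate lives in $C(\Omega)$.
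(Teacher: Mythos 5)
Your proof is correct and follows exactly the route the paper intends: it replicates the argument of Theorem \ref{T1:discrete} with all approximation and distance-to-$K$ estimates carried out in the $C(\Omega)$ norm, then concludes via Proposition \ref{P:approxcenter2} and Lemma \ref{L:Rlimit2}, which is precisely what the paper's (abbreviated) proof prescribes. The quantitative bounds you obtain, including $\dist(\hat f,K)_{C(\Omega)}\le \mu C_Y$ and $\e=\mu\max(\mu+1,C_Y)$, match the theorem statement.
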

       
   \begin{proof} 
   The proof is the same as that of Theorem \ref{T1:discrete} except that now we choose ${\nnew f_\Sigma} \in \Sigma \cap K$ to satisfy $\|f-{\nnew f_\Sigma}\|_{C(\Omega)}\le \delta$ and use Proposition \ref{P:approxcenter2} and Lemma \ref{L:Rlimit2}.
 \end{proof}

  We also have the analogue to Theorem \ref{T2:discrete}.  
  
  \begin{theorem} 
   \label{T4:discrete}
    Let $K $  satisfy the {\bf Main Assumption}, let $X$ satisfy the embedding \eref{Cembed}, and let the set $\Sigma$ satisfy the condition
   \be
   \label{T111} \dist(K,\Sigma)_{C(\Omega)}
     < \delta.
   \ee
     If $f\in K_w$, where $w=\lambda_{\bf x}(f)$, 
  then the function 
   \be
   \label{min11}
  \hat f:=\hat f_{\Sigma} \in \argmin_{g\in \Sigma} \cL'_K (g), \quad 
    \hbox{where}\quad \cL'_K(g):=\|\lambda_{\bf x}(g)-w\|+\dist(g,K)_{C(\Omega)},
   \ee
    is a near optimal recovery 
    of $f$, that is
   \be
   \label{optimal1}
   \|f-\hat f\|_X\le CR(K_w)_X,
   \ee
 for any $C>2$, provided $R(K_w)_X\neq 0$ and $\delta$ is  sufficiently small. More precisely, it is sufficient that
   $$
   C_X\e+2R(K(w,2\e))_X\leq CR(K_w)_X, \quad \hbox{where}\quad 
   \e:=2\delta.
   $$
   \end{theorem}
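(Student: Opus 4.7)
The plan is to mirror the proof strategy of Theorem \ref{T2:discrete}, but substituting the point-value analogues Proposition \ref{P:approxcenter2} and Lemma \ref{L:Rlimit2} for Theorem \ref{P:approxcenter} and Lemma \ref{L:Rlimit}. The essential point is that the penalty $\dist(g,K)_{C(\Omega)}$ is measured in the $C(\Omega)$ norm, which is precisely the hypothesis Proposition \ref{P:approxcenter2} demands (and as Remark \ref{R:counter} warns, using the $X$ norm here would be fatal).

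First I would use hypothesis \eqref{T111} to select a comparison element $f_\Sigma \in \Sigma$ with $\|f-f_\Sigma\|_{C(\Omega)} \le \delta$. The minimality of $\hat f$ then yields $\cL'_K(\hat f) \le \cL'_K(f_\Sigma)$. I would bound each of the two terms of $\cL'_K(f_\Sigma)$ by $\delta$: the data-fidelity term via the inequality $\|\lambda_{\bf x}(f)-\lambda_{\bf x}(f_\Sigma)\| \le \|f-f_\Sigma\|_{C(\Omega)}$ (noted right after \eqref{defpt}), and the penalty term by using that $f \in K$ so that $\dist(f_\Sigma,K)_{C(\Omega)} \le \|f_\Sigma-f\|_{C(\Omega)} \le \delta$. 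Thus $\cL'_K(\hat f) \le 2\delta$, which in turn forces both $\|\lambda_{\bf x}(\hat f)-w\| \le 2\delta$ and $\dist(\hat f,K)_{C(\Omega)} \le 2\delta$.

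Setting $\e := 2\delta$, the candidate $\hat f$ satisfies the hypotheses of Proposition \ref{P:approxcenter2}, which therefore gives
\[
\|f-\hat f\|_X \le C_X \e + 2R(K(w,2\e))_X
\]
for every $f \in K_w$. To finish, I would invoke Lemma \ref{L:Rlimit2}: since $R(K_w)_X > 0$, the quantity $C_X \e + 2R(K(w,2\e))_X$ tends to $2R(K_w)_X < CR(K_w)_X$ as $\delta \to 0^+$ (for any pre-specified $C>2$), so choosing $\delta$ small enough delivers the desired bound \eqref{optimal1}.

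There is no genuine obstacle here; the argument is essentially mechanical once the two point-value building blocks are available. The only subtlety worth flagging is the norm used in the penalty, which must remain $C(\Omega)$ throughout in order for the Lipschitz estimate on $\lambda_{\bf x}$ and the compactness step inside Lemma \ref{L:Rlimit2} to both apply; this is exactly the reason the proof of Theorem \ref{T2:discrete} does not transfer verbatim to the point-evaluation setting.
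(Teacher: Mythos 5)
Your proof is correct and is exactly the argument the paper intends: it follows the template of Theorem \ref{T2:discrete}, choosing $f_\Sigma\in\Sigma$ with $\|f-f_\Sigma\|_{C(\Omega)}\le\delta$, deducing $\cL'_K(\hat f)\le 2\delta$, and then invoking Proposition \ref{P:approxcenter2} with $\e=2\delta$ and Lemma \ref{L:Rlimit2}. The paper's own proof is just a one-line pointer to this same substitution, so your write-up is a faithful (and more detailed) version of it.
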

   
 \begin{proof} 
 The proof is the same as that of Theorem \ref{T2:discrete} except that now we choose 
    ${\nnew f_\Sigma}\in \Sigma $ to satisfy $\|f-\AB{f_\Sigma}\|_{C(\Omega)}\le \delta$ and use Proposition \ref{P:approxcenter2} and Lemma \ref{L:Rlimit2}. 
 \end{proof}

  \section{Noisy measurements}
  \label{S:noisy}
  In this section, we consider the case when the measurements are corrupted by an additive deterministic noise.  
  Namely, we assume that our measurements are now given by
  \be
  \label{nm}
  \tilde w_j=f(x_j)+\eta_j,\quad j=1,\dots,m,
  \ee
 where the real numbers $\eta_j$, $j=1,\dots,m$, are unknown to us.  However,
 to derive quantitative results on performance, we will have to make some
 assumptions on the unknown noise vector ${\boldsymbol \eta}:=(\eta_1,\dots,\eta_m)$.  We assume that  all we know about ${\boldsymbol \eta}$ is its size.
 
We continue to let $w_j=f(x_j)$, $j=1,\dots, m$,  and $w:=(w_1,\dots,w_m)$.   We put ourselves in the
same setting as in the previous section where $f\in K$ and $K$ satisfies our {\bf Main Assumption}.  We let $K_w$ again be the set of 
$f\in K$ such that $f(x_j)=w_j$, $j=1,\dots,m$, and continue to use the inflated sets $K(w,\e)$, $\e>0$.

We assume that we have a bound on the noise vectors of the form
\be
\label{bnoise}
\|\boldsymbol \eta\|\le \gamma <\infty.
\ee
     Then, the  totality of information we have  
about $f$ is that $f\in K$ and $f$ satisfies the data $\tilde w-{\boldsymbol \eta}$ where $\tilde w$ is our observation vector and
 $\|\boldsymbol\eta\|\le \gamma$.   It follows that the totality of information we have about $f$ is that it is in the
set $K(\tilde w,\gamma)$.   This means that the error of optimal recovery of $f$ from such noisy observations is given by
\be
\label{bestnoisy}
{\bf best \ noisy \  rate} =R(K(\tilde w,\gamma))_X.
\ee

     We formulate recovery results for any set $K$ that satisfies our {\bf Main Assumption}. 
For any function $g\in C(\Omega)$, we continue to use the notation $\lambda_{\bf x}(g):=(g(x_1),\dots,g(x_m))$, where ${\bf x}=(x_1,\ldots,x_m)\in \Omega$.  
To recover $f$ from the noisy observations $\tilde w$, we use the loss function 
\be
\label{noisylossP}
 \cL_{K,\tau}(g):=\tau \|\tilde w- \lambda_{\bf x}(g)\| +\dist(g,K)_{C(\Omega)},
\ee
 where the parameter $\tau$ is a positive real number.

\begin{theorem} 
   \label{T5:discretenoisyP}
    Let $K$  satisfy the {\bf Main Assumption}, let $X$ satisfy the embedding \eqref{Cembed}, and let the set $\Sigma$ satisfy
    the condition 
   \be
   \label{T11P} \dist(K,\Sigma)_{C(\Omega)}< \delta.
   \ee
     Consider the function 
   \be
   \label{min11P}
 \hat f:= \hat f_{\Sigma,\tau} \in \argmin_{g\in \Sigma}\   \cL_{K,\tau}  (g),
 \quad  \hbox{where}\quad 
 \cL_{K,\tau}(g):=\tau \|\tilde w- \lambda_{\bf x}(g)\| +\dist(g,K)_{C(\Omega)},
   \ee
where $\tilde w$ are the noisy data observations of a function $f\in K$ with an unknown noise vector
   $\boldsymbol \eta$ which satisfies $\|\boldsymbol \eta\|\le \gamma$ for some finite number $\gamma\le 1$ considered unknown.
   Then,  $\hat f$ is a near optimal recovery of $f$, that is,
   \be
   \label{noisyerrorP}
   \|f-\hat f\|_X\le C R(K(\tilde w, \gamma))_X,
   \ee
    for any $C>2$, provided   
    $R(K(\tilde w, \gamma))_X\neq 0$, 
    $\delta\le \tau^2$ 
    and 
  $\tau$ is  sufficiently small.
   \end{theorem}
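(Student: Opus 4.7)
My plan is to mimic the strategy of Theorems~\ref{T2:discrete} and \ref{T4:discrete}, but with the baseline inflation $\gamma$ replacing the zero baseline of the noiseless case and using Remark~\ref{R:genconvergence} in place of Lemma~\ref{L:Rlimit}. First, using the hypothesis $\dist(K,\Sigma)_{C(\Omega)}<\delta$, I pick a comparison element $f_\Sigma\in\Sigma$ with $\|f-f_\Sigma\|_{C(\Omega)}\le\delta$. Then I plug $f_\Sigma$ into the defining inequality of $\hat f$: the data-fidelity term is estimated by
\[
\|\tilde w-\lambda_{\bf x}(f_\Sigma)\|\le \|\boldsymbol\eta\|+\|\lambda_{\bf x}(f)-\lambda_{\bf x}(f_\Sigma)\|\le \gamma+\delta,
\]
while $\dist(f_\Sigma,K)_{C(\Omega)}\le \delta$. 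This gives the master inequality
\[
\tau\|\tilde w-\lambda_{\bf x}(\hat f)\|+\dist(\hat f,K)_{C(\Omega)}\le \tau(\gamma+\delta)+\delta.
\]

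From the master inequality, under the assumptions $\gamma\le 1$ and $\delta\le\tau^2$, I read off
\[
\|\tilde w-\lambda_{\bf x}(\hat f)\|\le \gamma+\delta+\delta/\tau\le \gamma+\tau^2+\tau,\qquad \dist(\hat f,K)_{C(\Omega)}\le \tau(\gamma+\delta)+\delta\le 2\tau+\tau^3,
\]
both bounded by $\gamma+c_1\tau$ and $c_2\tau$ respectively for small $\tau$, with absolute constants $c_1,c_2$. Next, for any $\xi>0$ I choose $h\in K$ with $\|\hat f-h\|_{C(\Omega)}\le \dist(\hat f,K)_{C(\Omega)}+\xi\le c_2\tau+\xi$. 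Applying $\lambda_{\bf x}$ and the triangle inequality then yields
\[
\|\tilde w-\lambda_{\bf x}(h)\|\le \|\tilde w-\lambda_{\bf x}(\hat f)\|+\|\hat f-h\|_{C(\Omega)}\le \gamma+(c_1+c_2)\tau+\xi,
\]
so $h\in K(\tilde w,\gamma+(c_1+c_2)\tau+\xi)$. Since $f\in K(\tilde w,\gamma)\subset K(\tilde w,\gamma+(c_1+c_2)\tau+\xi)$ as well (because $\|\boldsymbol\eta\|\le\gamma$), both elements lie in the same inflated set, hence
\[
\|f-h\|_X\le 2\,R\bigl(K(\tilde w,\gamma+(c_1+c_2)\tau+\xi)\bigr)_X.
\]

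Combining this with the embedding \eqref{Cembed} to convert the $C(\Omega)$-bound on $\|\hat f-h\|$ into an $X$-bound, I obtain
\[
\|f-\hat f\|_X \le C_X(c_2\tau+\xi)+2\,R\bigl(K(\tilde w,\gamma+(c_1+c_2)\tau+\xi)\bigr)_X.
\]
Letting $\xi\downarrow 0$ and invoking Remark~\ref{R:genconvergence} to send $\tau\downarrow 0$ yields a right-hand side that tends to $2R(K(\tilde w,\gamma))_X$. Since $R(K(\tilde w,\gamma))_X\neq 0$, for any $C>2$ one can pick $\tau$ (and accordingly $\delta\le\tau^2$) sufficiently small so that the right-hand side is at most $C\,R(K(\tilde w,\gamma))_X$, which proves \eqref{noisyerrorP}. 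The main obstacle, and the reason one needs the $\tau$-scaling together with $\delta\le\tau^2$, is that the data-fit residual of $\hat f$ must be driven down to essentially the noise floor $\gamma$; the penalty weight $\tau$ has to be small enough to squeeze the data term while still large relative to $\delta$ so that the $\delta/\tau$ contribution does not spoil the continuity argument in Remark~\ref{R:genconvergence}.
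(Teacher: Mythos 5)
Your proposal is correct and follows essentially the same route as the paper's own proof: the same comparison with $f_\Sigma$, the same master inequality, the same passage to an $h\in K$ near $\hat f$ landing in the inflated set $K(\tilde w,\gamma+O(\tau))$, and the same appeal to Remark \ref{R:genconvergence} to shrink $\tau$. The only cosmetic difference is your use of an auxiliary $\xi>0$ in selecting $h$ (the paper takes the distance to be attained, which is justified since $K$ is compact) and slightly different explicit constants.
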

    \begin{proof}  
    We   assume $\tau<1$ and let  $f\in K_w$ 
     and  $\tilde w$ be the noisy observations of $f$  with noise vector $\boldsymbol \eta$ satisfying  $\|\boldsymbol \eta\|\le  \gamma$.  Let ${\nnew f_\Sigma}\in\Sigma$
     satisfy $\|f-{\nnew f_\Sigma}\|_{C(\Omega)}\le \delta$.  Then, we know that 
     $\|\lambda_{\bf x}({\nnew f_\Sigma})-w\|\le\delta$ and so   $\|\lambda_{\bf x}({\nnew f_\Sigma})-\tilde w\|\le \delta+\gamma$.   Since ${\nnew f_\Sigma}\in \Sigma$,
     we have   
  $$
  \tau \|\tilde w-\lambda_{\bf x}( \hat f)\| + \dist(\hat f,K)_{C(\Omega)} \le  \tau \|\tilde w-\lambda_{\bf x}({\nnew f_\Sigma})\| +\dist({\nnew f_\Sigma},K)_{C(\Omega)} \le \tau(\delta+\gamma)+\delta < \tau\gamma+ 2\delta.
 $$
    It follows   that  for 
    $\delta\le \tau^2$
     we have
   \be
   \label{followsthatP}
   \|\tilde w-\lambda_{\bf x}(\hat f)\|<  \gamma + 2\frac{\delta}{\tau} \leq \gamma+2\tau \quad {\rm and} \quad \dist(\hat f,K)_{C(\Omega)}\le  \tau\gamma+2\delta.
   \ee
  Now let $h\in K$ satisfy 
  \be
  \label{satisfy1}
  \|\hat f- h\|_{C(\Omega)}\le \tau\gamma+2\delta
   \leq \tau\gamma+2\tau^2
   {\nnew =\tau(\gamma+2\tau)< 3\tau}.
  \ee
  Then, we have
\begin{eqnarray*}
  \|\tilde w-\lambda_{\bf x}(h)\|
  &\le &\|\tilde w-\lambda_{\bf x}(\hat f)\|+\|\lambda_{\bf x}(\hat f)- \lambda_{\bf x}(h)\|\le \gamma+2\tau+\|\hat f-h\|_{C(\Omega)}\\
  \nonumber
  &\le &  \gamma+2 \tau+ \tau\gamma+2\tau^2<  \gamma +\tau(4+\gamma)
 \leq \gamma+5\tau.
\end{eqnarray*}

So, $h\in K(\tilde w,  \gamma+5\tau)$
  and so is $f$.  We therefore obtain 
  $$
  \|f-h\|_X\le 2 R(K(\tilde w, \gamma+5\tau))_X.
  $$
  Finally, from  the embedding inequality \eref{Cembed} and \eref{satisfy1}, we have
  \begin{eqnarray}
 \nonumber
  \|f-\hat f\|_X&\le& {\nnew \|f-h\|_X+\|h-\hat f\|_X
 \leq
 \|f-h\|_X+C_X\|h-\hat f\|_{C(\Omega)}}
 \\
 &\leq &
 {\nnew 2 R(K(\tilde w,  \gamma+5\tau ))_X+ 3C_X\tau.}
  \label{finally}
  \end{eqnarray}
  If we  {\nnew use Remark \ref{R:genconvergence}, we obtain \eref{noisyerrorP} 
  for any $C>2$, provided we take $\tau$ suitably small, see \eref{limit1}}.
  \end{proof}

      \begin{remark}
\label{R:whytau}
 The appearance of the parameter $\tau$ in the case of noisy observations is
 quite natural since the confidence in the measurements decreases as the noise level increases.  When there is no noise, we have complete confidence in the measurements and so we can take $\tau=1$ as was done in the previous section. \end{remark}

      \begin{remark}
\label{R:choosetau}
 Note that in the above, it is not necessary to know either $\gamma$ or $\tau$ in order to arrive at the inequality \eref{finally}.  However, to guarantee that the recovery is near optimal, one needs to  choose $\tau$ (and hence $\delta$)  sufficiently small depending on the nature of $K$.  
 \end{remark}

\begin{remark}
\label{R:stochastic}
 The most common setting for noise in statistics is to assume
 that the noise vector is composed of independent random draws
 with respect to an underlying probability distribution. Optimal performance in such a setting is referred to as minimax rates.  We do not treat this case in this paper since it requires some substantially new ideas. 
 \end{remark}

 \section{Variants} 
 \label{S:variants}
 
 This section considers variants of the minimization problems already discussed and emphasises certain aspects of these problems that are useful in numerical implementation. 
Since the treatment of the other cases is similar, we concentrate on the model assumption $f\in K=U(Y)$ and the loss function $\cL_\mu$ given in \eref{loss1}. An alternative to $\cL_\mu$ is the loss function
 \be 
 \label{lossmodified}
 \cL'_\mu(g):= \|w-\lambda(g)\|^\alpha+\mu \|g\|_Y^\beta,
 \ee 
where $\alpha>0$ and $\beta>0$ are fixed. 
{\nnew
The modified loss function \eqref{lossmodified} generalizes the original loss function \eqref{loss1} by raising the data fidelity and regularization term to possibly different powers. The case $\alpha=\beta=2$ is particularly appealing when $Y$ is a Hilbert space as it leads to a simple expression for the derivative of $\cL'_\mu$. Also, the modified loss function can be viewed as a generalization of the classical LASSO procedure, which corresponds to $\alpha=2$ and $\beta=1$. In some 
special settings, such loss functions have been considered
before in the context of recovery of sparse signals, see \cite{Foucart, PJ} and the references therein.
}
\begin{remark}
 \label{R:variant}
 If $\Sigma$ is convex (e.g. if it is  a finite dimensional linear space),  if $\| . \|_Y$ is strictly convex, and if $\alpha, \beta \geq 1$, then the minimizer of $\cL'_\mu(g)$ over $g\in \Sigma$ is unique.   In this case, if 
 $\Sigma$  is a finite dimensional linear space, the solution $\hat f$ of the resulting optimization problem \eref{var1} 
  can be computed by available optimization algorithms. 
 Non-uniqueness can occur for other settings, for example when the second term is a quasi-norm or some other nonconvex regularizer. The latter are sometimes preferred due to their better performance in special cases. 
 
 \end{remark}

Following the ideas from Subsection \ref{SS:convex}, we  establish similar near optimality  results for this loss function
{\nnew in the case where the measurements are linear functionals on $X$.}

   \begin{theorem} 
   \label{T1:discreteAlt}
    Let $K=U(Y)$ with $Y$ a normed linear subspace of $X$ and let the set $\Sigma$ satisfy \eref{T11}.
   Then, for any $C>2$, the function  
   \be 
   \label{var1}
   \ds\hat f:=  \hat f_{\Sigma,\mu} \in \argmin_{g\in \Sigma} \cL'_\mu (g)
   \ee 
   is a near optimal recovery, i.e., 
   \be
   \label{nearoptAlt}
   \|f-\hat f\|_X\le CR(K_w)_X, \quad f\in K_w,
   \ee
provided 
  $R(K_w)_X\neq 0$,       $\delta^\alpha\le \mu^2$ and $\mu$ is sufficiently small.
     Moreover, in the case of numerical optimization producing $\tilde f$ such that 
     $\mathcal L'_\mu (\tilde f) \leq \mathcal L'_\mu(\hat f) + \epsilon$
  for some $\epsilon>0$, 
 the estimate \eref{nearoptAlt} remains valid with $\hat f$ replaced by $\tilde f$ and provided that $\epsilon\le \delta^\alpha\le \frac12\mu^2$ and $\mu$ is sufficiently small.
   \end{theorem}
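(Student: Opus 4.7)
The plan is to mimic the proof of Theorem \ref{T1:discrete}, carefully tracking how the powers $\alpha$ and $\beta$ enter the bounds. First, I choose a comparison element $f_\Sigma \in \Sigma \cap K$ with $\|f - f_\Sigma\|_X \le \delta$; such an element exists by \eref{T11}. Since $\lambda$ is Lipschitz with constant one and $f\in K_w$, we have $\|w - \lambda(f_\Sigma)\| \le \delta$, while $f_\Sigma \in K$ gives $\|f_\Sigma\|_Y \le 1$. Substituting into the loss yields $\cL'_\mu(f_\Sigma) \le \delta^\alpha + \mu$, and the minimality of $\hat f$ gives
\be
\label{newT1}
\|w - \lambda(\hat f)\|^\alpha + \mu \|\hat f\|_Y^\beta \le \delta^\alpha + \mu.
\ee

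Next I split \eref{newT1} into its two nonnegative summands. Using the hypothesis $\delta^\alpha \le \mu^2$ and assuming $\mu \le 1$, the right-hand side is bounded by $\mu^2 + \mu$. Therefore
\[
\|w - \lambda(\hat f)\| \le (\mu^2+\mu)^{1/\alpha} \qquad \text{and} \qquad \|\hat f\|_Y \le \left(\frac{\mu^2+\mu}{\mu}\right)^{1/\beta} = (\mu+1)^{1/\beta}.
\]
Set $\rho := (\mu+1)^{1/\beta} \ge 1$. Then $\rho^{-1}\hat f \in K$, and using the embedding \eref{embed} together with the bound on $\|\hat f\|_Y$,
\[
\dist(\hat f, K)_X \le \|\hat f - \rho^{-1}\hat f\|_X = (1-\rho^{-1})\|\hat f\|_X \le (1-\rho^{-1}) C_0 \rho = C_0(\rho - 1) = C_0\bigl((\mu+1)^{1/\beta} - 1\bigr).
\]
Define
\[
\e := \max\bigl\{(\mu^2+\mu)^{1/\alpha},\; C_0\bigl((\mu+1)^{1/\beta} - 1\bigr)\bigr\}.
\]
Then $\hat f$ satisfies \eref{morereasonable} with this $\e$, and $\e \to 0^+$ as $\mu \to 0^+$ (because both exponents $1/\alpha, 1/\beta$ are finite and positive). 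Applying Theorem \ref{P:approxcenter} gives
\[
\|f - \hat f\|_X \le \e + 2 R(K(w,2\e))_X, \qquad f\in K_w.
\]
By Lemma \ref{L:Rlimit} and the assumption $R(K_w)_X\neq 0$, for every $C>2$ the right-hand side is bounded by $C R(K_w)_X$ once $\mu$ (and hence $\delta$) is chosen small enough; this establishes \eref{nearoptAlt}.

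For the numerical variant, let $\tilde f \in \Sigma$ satisfy $\cL'_\mu(\tilde f) \le \cL'_\mu(\hat f) + \epsilon$. The above estimate on $\cL'_\mu(f_\Sigma)$ gives $\cL'_\mu(\tilde f) \le \delta^\alpha + \mu + \epsilon \le 2\delta^\alpha + \mu \le \mu^2 + \mu$, where we used the assumptions $\epsilon \le \delta^\alpha$ and $\delta^\alpha \le \tfrac12 \mu^2$. This is exactly the same upper bound as in \eref{newT1}, so the remainder of the argument proceeds verbatim with $\hat f$ replaced by $\tilde f$. The only technical obstacle is verifying that the two-term splitting argument after \eref{newT1} still produces $\e \to 0^+$ for arbitrary $\alpha,\beta > 0$; this is not delicate since $(\mu^2+\mu)^{1/\alpha}$ and $(\mu+1)^{1/\beta}-1$ are both continuous in $\mu$ and vanish at $\mu=0$.
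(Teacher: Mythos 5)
Your proposal is correct and follows essentially the same route as the paper's own proof: the same comparison element $f_\Sigma$, the same splitting of the minimality inequality into the data-fidelity and penalty terms, the identical rescaling $(1+\mu)^{-1/\beta}\hat f\in K$ leading to $\e=\max\bigl\{(\mu^2+\mu)^{1/\alpha},\,C_0\bigl((1+\mu)^{1/\beta}-1\bigr)\bigr\}$, and the same invocation of Theorem \ref{P:approxcenter} and Lemma \ref{L:Rlimit}. The treatment of the numerical variant also matches the paper verbatim.
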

   \begin{proof} 
   Let $f$ be any function in $K=U(Y)$ which satisfies the data, i.e., $f$ is in $K_w$, and let ${\nnew f_\Sigma}\in \Sigma \cap K$ satisfy $\|f-{\nnew f_\Sigma}\|_X\le \delta$. 
   From the definition of $\hat f$, we know that
  \be
  \label{minprop1Alt} 
  \|w-\lambda( \hat f)\|^\alpha+\mu \|\hat f\|_Y^\beta \le  \|w-\lambda({\nnew f_\Sigma})\|^\alpha+\mu \|{\nnew f_\Sigma}\|_Y^\beta\le   \delta^\alpha+\mu,
  \ee
  where the first term was estimated by 
  $$\|w-\lambda({\nnew f_\Sigma})\|= \|\lambda(f-{\nnew f_\Sigma})\|\le \|f-{\nnew f_\Sigma}\|_X\le\delta,
  $$
  and the second term uses that ${\nnew f_\Sigma}\in K$ so that $\|{\nnew f_\Sigma}\|_Y\le 1$.
  
  We now assume that  $\delta^\alpha\le  \mu^2$ and $\mu$ small. We see from \eref{minprop1Alt} that $\hat f$ almost satisfies the data since
   $$\|w-\lambda(\hat f)\|^\alpha\le \delta^\alpha+\mu\le \mu(1+\mu).$$
    Also,  $\hat f$ is close to $K$ since \eref{minprop1Alt} shows that  $\|\hat f\|_Y^\beta\le 1+\mu$ and so $ (1+\mu)^{-\frac1\beta} \hat f\in K$ and from \eref{embed} we have
   $$ \|\hat f - (1+\mu)^{-\frac1\beta} \hat f\|_X = (1-(1+\mu)^{-\frac1\beta})\|\hat f\|_X \le (1-(1+\mu)^{-\frac1\beta}) C_0\|\hat f\|_Y\le  ((1+\mu)^{\frac1\beta}-1)C_0.
   $$
      This means that  $g_\e:=\hat f$ satisfies \eref{morereasonable} for $\e:= \max \Big(C_0((1+\mu)^{\frac1\beta}-1),(\mu^2+\mu)^{\frac1\alpha}\Big)$.   
  Theorem \ref{P:approxcenter} shows that   for any $C>2$, the function $\hat f$ is a near optimal recovery with constant $C$ provided
  $\mu$ (and hence $\delta$) is sufficiently small.
  
  In the case of a numerical approximation $\tilde f$ to  $\hat f$,  the estimate   \eqref{minprop1Alt} gives
 $$
 \|w-\lambda( \tilde f)\|^\alpha+ \mu  \|\tilde f\|^{\beta}_Y \le
 \delta^\alpha+\mu +\epsilon\le 2\delta^\alpha+\mu\le\mu^2+\mu,
 $$
  and the proof is completed as above.
  \end{proof}

  \section{Sampling rates}
  \label{S:sampling} 
  Although this is not the main topic of this paper, an important issue
  in learning is how many samples $m$ are needed to guarantee that an $f\in K$
  can be learned with a prescribed accuracy. In this section, we mention three concepts that give
  a benchmark for the accuracy issue.  We refer to these concepts in the next section where we discuss what our results say
  in two common settings for model classes in learning.
    
  So far, we have discussed learning primarily from the viewpoint that we were given data and wish to recover the function $f$ which gave rise to this data.
  In that setting, we had no role in the choice of the data sites.  A natural question is if we are given a budget $m$ of samples we can take of a function $f\in K$,    what would be the best choice of data sites.  Historically, there are three concepts that address this issue: Gelfand widths, sampling numbers, and
  averaged sampling numbers.  We briefly introduce these notions in this section.
  
  \subsection {Gelfand widths}
  \label{SS:Gelfand}
  Suppose that $K$ is a compact set in the Banach space $X$ and we are allowed to use our knowledge of $K$ to introduce $m$ sampling functionals $\lambda_1,\dots,\lambda_m$ to use in
  sampling the elements of $K$. Which functionals should we choose and what is the accuracy at which we could recover any $f\in K$ from the data $\lambda_1(f),\dots,\lambda_m(f)$?  The Gelfand width 
  \be 
  \label{Gelfandwidth}
  d^m(K)_X:= \inf_{\lambda_1,\dots,\lambda_m\in X^*} \sup_{f\in K} R(K_{\lambda(f)})_X, \quad \hbox{where}\quad \lambda(f):=(\lambda_1(f),\dots,\lambda_m(f)),
  \ee 
  is the optimal accuracy we can achieve in the worst case sense.  
  
  The Gelfand widths of model classes $K$ are a well studied concept in 
  Functional Analysis and Approximation Theory (see e.g. the book of Pinkus \cite{P}).  The Gelfand widths of classical model classes $K$ in classical Banach spaces $X$ are for the most part known and the Gelfand widths of novel model classes proposed in modern learning are currently being investigated
  (see e.g. \cite{SX,PV}).   Let us also note that Gelfand widths were the
  origins  of compressed sensing which studies the encoding and decoding
  of signals $f$ from a model class $K$ described by sparsity.  There it is shown that a random choice of $\lambda_1,\dots,\lambda_m\in X^*$ is with high probability near optimal (see \cite{Do,CDD} and the many books written on compressed sensing such as \cite{FR}). 
  
  For us, the Gelfand width $d^m(K)_X$ gives a lower bound for the 
  accuracy with which we can recover a general $f\in K$ from linear measurements of $f$.  A general criticism of  the concept of Gelfand widths is that
   in practical applications of sampling, one does not have access to arbitrary chosen general linear functionals of the target signal. Instead, the available functionals are more restricted.   For this reason, one typically imposes restrictions on the functionals $\lambda_j$, $j=1,\dots,m$.  If one requires that the sampling is done via point evaluation of $f$, then this leads to the concept of
  sampling numbers.
  
  \subsection{Sampling numbers}
  \label{SS:sampling}
   Let $K$ be  a subset of $C(\Omega)$ with $\Omega$ the closure of a bounded domain
   in $\R^d$.  If we restrict the linear functionals used as data observations to be point values of $f$, 
   then the optimal performance of $m$ such samples of $f$ is given by
   \be 
   \label{sn}
   s_m(K)_X:=\inf_{x_1,\dots,x_m\in\Omega} \sup_{f\in K} R(K_{f(\bx)})_X,\quad
   f({\bf x}):=(f(x_1), \ldots,f(x_m)),\quad m=1,2,\dots.
   \ee 
   The points $x_1,\dots,x_m$ that give the infimum in $s_m(K)_X$ are the optimal sampling sites.
   For spaces $X$ for which point evaluations are linear functionals, we obviously have $s_m(K)_X\ge d^m(K)_X$ and the difference in these two numbers is often substantial.  Sampling numbers are well studied for
   classical model classes $K$ in classical Banach spaces $X$, especially in the Information Based Complexity community, where it is referred to as standard information.  However, for novel model classes of functions of many variables
   that arise in modern learning there are many open questions on the asymptotic
   decay of the sampling numbers as $m\to\infty$.

   \subsection{Average sampling}
   \label{SS:avesampling}
   It is sometimes difficult to determine the sampling numbers of  a model class $K$ and even more so the position of the optimal data sites. In this case, one  studies the expected performance
   when the data sites $\bx$ are chosen randomly with respect to a probability measure 
   {\color{red} $\rho$} on $\Omega$. The relevant measure of performance is the averaged
   sampling numbers given by
   \be 
   \label{asn}
{ \bar s_m(K,{\nnew\rho})_X:=  
{\rm Exp}_{\bf x}\,\sup_{f\in K}R(K_{f(\bx)})_X.}
   \ee

  \section{Examples}
  \label{S:examples}
  
  The main objective of this paper is to describe the optimal
  performance that is possible for a learning {\nnew procedure}  and to show that this optimal performance can be achieved by solving
  an over-parameterized optimization problem.  In this sense, we
  provided a justification for the use of over-parameterized optimization which is now  a common staple in machine learning.
  Exactly how this plays out in practice depends very much on the model class $K$ which gives the properties of the  function $f$ to be learned. 
  
  Two natural questions arise in the numerical implementation  of this theory.
  The first is how fine must we take $\Sigma_n$ and how to chose the parameters in the loss function in order to guarantee
  near optimal learning.  The second question is to describe a numerical method with convergence guarantees for solving
  the resulting {\nnew finite-dimensional } optimization problem.

  We know from the exposition given above that when given a model class $K$ and linear data observations of an $f\in K$ that the optimal
 accuracy in recovering $f$ from these observations is $R(K_w)_X$
 and that a near optimal recovery is given by solving a {\nnew finite-dimensional }
 over-parameterized optimization problem.  The amount of over-parameterization necessary depends on $R(K(w,\e))_X
 $ and how fast it converges to $R(K_w)_X$ as $\e\to 0^+$.  This in turn depends very much on the particular $K$ and requires an ad-hoc analysis depending on $K$.  
 {\nnew  We describe a typical  way to proceed in the setting of 
 Theorem \ref{T1:discrete}.  The first step is to  construct a  sequence of spaces $\{ \Sigma_n \}_{n}$ for which $\dist(K,\Sigma\cap K)_X\lsim n^{-r}$ for some $r>0$.  The next step is to provide bounds for the Chebyshev radius
  and the inflated Chebyshev radii corresponding to the data $w=\lambda(f)\in \R^m$.   Typically we prove an estimate like
 $R(K(w,\e))_X\leq R(K_w)_X+\e$. In that case, according to the theorem, we   need 
 $n\gsim [R(K_w)_X]^{-2/r}$
  and a value $\mu\asymp n^{-r/2}$ in the loss function \eref{loss1} to guarantee that $\hat f$ is a near optimal recovery of $f$.
 }
 
 In order to illustrate what is involved in such an analysis, we discuss two examples in this section.  There are numerous other examples that could be considered and would be relevant to what is done in current practice of machine learning.

  \subsection{Point values of a smooth function}
  \label{SS:classical}

  A traditional setting in learning is to consider the data to be point evaluations of  a function $f$ defined on a domain $\Omega\subset \R^d$ and to  measure the error of  recovering $f$  in an $L_q(\Omega)$ norm, $1\le q\le \infty$. This is an extensively studied setting in IBC.  The texts \cite{TW,NW} are general references for this case. Our goal in this section is  to shine a light on
  what the results of the present paper have to say about optimal learning in this setting. For simplicity of discussion,   we assume $\Omega:=[0,1]^d$ and $q=2$; the extension to $q\neq 2$
  and more general domains can be found for example in
  \cite{KNS} and the references in that paper.
  
  For our model classes, we consider the unit ball
  $K:=U(W^s(L_p(\Omega)))$, $s>0$, $1< p\le  \infty$, of the Sobolev space $W^s(L_p(\Omega))$.
  In order to have $K$ a compact subset of $C(\Omega)$, we assume
  $s>d/p$.  The results mentioned in this section generalize to
  the case when $\Omega$ is a bounded Lipschitz domain and
  the Sobolev space is replaced by a more general Besov space
  as long as we continue to have a compact embedding into $C(\Omega)$.
  
  Let $x_j\in\Omega$, $j=1,\dots, m$, be $m$ data sites and 
 $w_j=f(x_j)$,
 $j=1,\dots,m$,
  be data observations of an $f\in K$.  We take these measurements to be exact;  noisy measurements can be treated 
  as discussed in \S \ref{S:noisy}.
     We use our notation $\bx=(x_1,\dots,x_m)$ for the data sites.  
  
  The optimal recovery error $R(K_w)_X$,  $X=L_2(\Omega)$, depends on the position of the data sites as is described for example in
  \cite{NWW,KS}.   It is known that  near optimal sampling sites $\bx$ are those that are uniformly spaced and the optimal recovery error $R(K_w)_{L_2(\Omega)}$ in the case of uniform spacing is $\approx m^{-s/d+ (1/p-1/2)_+}$.  For more general positioning of the point $\bx$, the optimal recovery rate is also known and depends on the maximal distance between the points of $\bx$ (see \cite{NWW}).
  Additionally, it is known  that
  $m$ random sample sites are near optimal save for a possible logarithm \cite{KNS}. {\nnew Procedures}  for near optimal
  recovery are known   using quasi-interpolants (see  \cite{KNS}).   
  
   Our results show that a near optimal recovery can be obtained
   by choosing a sufficiently fine linear or nonlinear space
   $\Sigma=\Sigma_n$, and solving the penalized least squares problem
   \be
   \label{loss11}
   \hat f'_\Sigma:=\argmin_{S\in\Sigma}
   \left[\Big[\frac{1}{m}\sum_{j=1}^m[w_j-S(x_j)]^2\Big]^{1/2}+\mu \|S\|_{W^s(L_p(\Omega))}\right ], 
   \ee
   with $\mu$ chosen sufficiently small.  According to \S \ref{S:variants},  we may also obtain near optimal performance by using the modified loss
   \be
   \label{loss113}
   \hat f_\Sigma:=\argmin_{g\in\Sigma}
\left [\frac{1}{m}\sum_{j=1}^m[w_j-g(x_j)]^2+\mu \|g\|^p_{W^s(L_p(\Omega))}\right ], 
   \ee
   with $\mu$ chosen sufficiently small.  This latter loss is convenient for numerical implementation as discussed below.
   There are several natural choices for $\Sigma$ such as a linear FEM space or a linear space spanned by B-splines or wavelets.

 \subsubsection{Analysis}  
   {\nnew 
   
   As a starting point for our analysis, let us recall the following known lemma.
   
   \begin{lemma}
       \label{L:pwl}  Let $1\le p\le\infty$, $n\ge 2$, and $0=\xi_1<\xi_2<\cdots<\xi_n=1$. Then given any $f$  with $f'\in L_p[0,1]$, the piecewise linear function $S$ which interpolates $f$ at
       the points $\xi_1,\dots,\xi_n$, and has breakpoints only at these points,  satisfies the inequalities:
\vskip .1in
       \noindent 
       {\rm (i)} \ $\|S'\|_{L_p[0,1]}\le \|f'\|_{L_p[0,1]}$,
       \vskip .1in
       \noindent 
       {\rm (ii)} $\|f-S\|_{L_2[0,1]}\le \|f'\|_{L_p[0,1]}h^s, \quad where\  h:=\max_{1\le j<n} |\xi_{j+1}-\xi_j|$\  and  \ 
   $s:=1-(1/p-1/2)_+$.
     \end{lemma}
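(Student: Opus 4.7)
The plan is to prove (i) and (ii) interval-by-interval and then assemble the global bound by summing, with the only nontrivial issue being the choice of norm aggregation for different ranges of $p$.

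For (i), on each interval $I_j := [\xi_j, \xi_{j+1}]$ of length $h_j$, the slope of the linear interpolant is
$S' \equiv \frac{f(\xi_{j+1}) - f(\xi_j)}{h_j} = \frac{1}{h_j}\int_{I_j} f'(t)\, dt$,
so $S'$ restricted to $I_j$ is the average of $f'$ on $I_j$. Jensen's (or Hölder's) inequality immediately gives $\|S'\|_{L_p(I_j)} \le \|f'\|_{L_p(I_j)}$ for every $1 \le p \le \infty$. Summing the $p$-th powers over $j$ (or taking the max for $p=\infty$) yields (i).

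For (ii), write $\phi := f - S$; then $\phi$ vanishes at both endpoints of each $I_j$, so the ``averaging trick'' $2\phi(x) = \int_{\xi_j}^x \phi' - \int_x^{\xi_{j+1}} \phi'$ gives the pointwise bound $|\phi(x)| \le \tfrac12 \int_{I_j} |\phi'|$. Combining Hölder on $I_j$ and (i) (which yields $\|\phi'\|_{L_p(I_j)} \le 2\|f'\|_{L_p(I_j)}$) produces the key local estimate
\be
\label{plan:local}
\|\phi\|_{L_2(I_j)}^2 \le h_j\,\|\phi\|_{L_\infty(I_j)}^2 \le h_j^{\,3-2/p}\,\|f'\|_{L_p(I_j)}^2,
\ee
where the factor $\tfrac14$ from squaring the pointwise bound cancels the $4$ from $\|\phi'\|_{L_p(I_j)}^2 \le 4\|f'\|_{L_p(I_j)}^2$.

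It remains to sum \eref{plan:local} over $j$ and convert the mixed $(h_j, \|f'\|_{L_p(I_j)})$ quantities into $h^{2s}\|f'\|_{L_p[0,1]}^2$. This is the step that splits into two cases and is the only place where care is needed. For $1 < p \le 2$ (where $s = 3/2 - 1/p$), bound $h_j^{3-2/p}\le h^{3-2/p}=h^{2s}$ and then exploit $2/p \ge 1$ via the super-additivity of $t\mapsto t^{2/p}$ applied to $t_j = \|f'\|_{L_p(I_j)}^p$, which gives $\sum_j \|f'\|_{L_p(I_j)}^2 \le \|f'\|_{L_p[0,1]}^2$. For $p \ge 2$ (where $s=1$), factor $h_j^{3-2/p} = h_j^{2}\cdot h_j^{1-2/p}\le h^{2}\cdot h_j^{1-2/p}$ and apply Hölder's inequality with conjugate exponents $p/(p-2)$ and $p/2$ to the sum $\sum_j h_j^{1-2/p}\|f'\|_{L_p(I_j)}^2$; the factor $\sum_j h_j = 1$ conveniently drops out, leaving $\|f'\|_{L_p[0,1]}^2$. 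Taking square roots yields (ii) in both regimes, and the $p=\infty$ case follows by the same computation with $\|f'\|_{L_\infty}$ replacing the $L_p$ norm. The main (mild) obstacle is keeping the constant equal to $1$: this is exactly what the factor-$\tfrac12$ gained from the two-sided vanishing of $\phi$ in the averaging trick absorbs the factor $2$ coming from $\|\phi'\|_{L_p}\le 2\|f'\|_{L_p}$.
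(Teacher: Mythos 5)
Your proof is correct and follows the same overall architecture as the paper's: a local $L_\infty$ bound for $f-S$ on each subinterval $I_j$ obtained from the fundamental theorem of calculus and H\"older's inequality, then squared and summed, with the sum controlled via $\|\cdot\|_{\ell_{2/p}}\le\|\cdot\|_{\ell_1}$ when $p\le 2$. Two of your micro-steps differ from the paper's, both to your advantage. First, to get the local bound $\|f-S\|_{L_\infty(I_j)}\le \|f'\|_{L_p(I_j)}|I_j|^{1-1/p}$ the paper asserts $\int_{I_j}|f'-S'|\le\int_{I_j}|f'|$ on the grounds that the mean $\mu_j$ is the best $L_1(I_j)$ constant approximation to $f'$; that justification is shaky (the best $L_1$ constant approximant is a median, not the mean, and e.g. $f'=3\chi_{[0,1/3]}$ on $I_j=[0,1]$ gives $\int_{I_j}|f'-\mu_j|=4/3>1=\int_{I_j}|f'|$), whereas your two-sided identity $2\phi(x)=\int_{\xi_j}^x\phi'-\int_x^{\xi_{j+1}}\phi'$ combined with $\|\phi'\|_{L_1(I_j)}\le 2\|f'\|_{L_1(I_j)}$ delivers the identical constant-one bound rigorously. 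Second, for $p\ge 2$ you run a direct H\"older argument with exponents $p/(p-2)$ and $p/2$, exploiting $\sum_j h_j=1$, while the paper simply reduces to the case $p=2$ via $\|f'\|_{L_2[0,1]}\le\|f'\|_{L_p[0,1]}$; both are valid, the paper's being a line shorter.
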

   \begin{proof}  For notational convenience, we assume $p<\infty$.  The same proof holds when $p=\infty$. Let $I_j:=[\xi_j,\xi_{j+1}]$ and let $\mu_j$ be the slope of $S$ on $I_j$, Then, we have
   \be 
   \label{slopes}
   |\mu_j|=\frac{1}{|I_j|}\left|\int_{I_j}f'\right|\le |I_j|^{-1} \left[\int_{I_j}|f'|^p\right]^{1/p}|I_j|^{1-1/p} =|I_j|^{-1/p} \left[\int_{I_j}|f'|^p\right]^{1/p},\quad 1\le j<n,
   \ee 
   and thus
   $$
   |\mu_j|^p\leq |I_j|^{-1}\int_{I_j}|f'|^p.
   $$
    This means that $\int_{I_j}|S'|^p\le \int_{I_j}|f'|^p$, $1\le j<n$, and (i) follows. 
    
    To prove (ii), we note that $f-S$ vanishes
    at each $\xi_j$, $1\le j\le n$,  and therefore we have
    \be 
    \label{Lpnorms}
 \|f-S\|_{L_\infty(I_j)}\le \int_{I_j}|f'-S'|\le \int_{I_j}|f'|\le \[\int_{I_j}|f'|^p\]^{1/p} |I_j|^{1-1/p},\quad 1\le j< n.
 \ee 
 Here, we used the fact that on $I_j$, $S'=\mu_j=\frac{1}{|I_j|}\int_{I_j}f'$ is the best $L_1(I_j)$ approximation to $f'$ by constants.
Using \eref{Lpnorms}, we obtain
 \be 
    \label{Lpnorms1}
 \|f-S\|^2_{L_2[0,1]}=\sum_{j=1}^{n-1}\int_{I_j}|f-S|^2\le   \sum_{j=1}^{n-1}\left[\int_{I_j}|f'|^p\right]^{2/p} |I_j|^{2-2/p+1}\le h^{3-2/p} \sum_{j=1}^{n-1}\left[\int_{I_j}|f'|^p\right]^{2/p}.
 \ee 
If $p\le 2$, the last sum is bounded by $\|f'\|^2_{L_p[0,1]}$ because an $\ell_{2/p}$ norm is bounded by an $\ell_1$ norm,
and we arrive at
$$
\|f-S\|^2_{L_2[0,1]}\le h^{3-2/p}\|f'\|^2_{L_p[0,1]}.
$$
This gives (ii) when $p\le 2$.  The case $p\ge 2$ in (ii) follows from the case $p=2$.
\end{proof}

\begin{remark}
\label{R:Linf}
    Notice that from the bound \eref{Lpnorms} we get $\|f-S\|_{L_\infty[0,1]}\le h^{1-1/p}\|f'\|_{L_p[0,1]}$.
    We will use this inequality later in this paper.
\end{remark}
    
{\nnew
We  take $X=L_2[0,1]$ as the space in which we measure the error of
   performance. 
   To describe in a bit more detail one simple example, we consider the univariate
   case $d=1$ and 
   $$
   K=\{f\in W^1(L_p[0,1]) \ : \ \|f\|_{W^1(L_p[0,1])}\leq 1\}, \quad 1< p\le  \infty,
   $$
   where
   $$
   \|f\|_{W^1(L_p[0,1])}:=\max\{\|f\|_{L_p[0,1]},\|f'\|_{L_p[0,1]}\}.
   $$
   }
   Let us return to our problem of near optimal recovery of a function in $K$ from its point values $w_j=f(x_j)$, $j=1,\dots,m$. For convenience, we assume that the endpoints $0,1$ are always data sites and $w_1=0$. Note that in this case
   $$
 K_w=\{f\in W^1(L_p[0,1]):\,f(x_j)=w_j, j=1, \ldots,m, \,\|f'\|_{L_p[0,1]}\leq 1\}.
   $$ Let
   $\Xi_n$ be the union of the points sets $\{x_1,\dots,x_m\}$ and $\{0,1/n,\dots,1\}$ and let $\Sigma_n$ be the linear space of piecewise linear functions subordinate
   to $\Xi_n$.  Thus, we are in the setting of Theorem \ref{T3:discrete}.  The above remark tells us that if we choose $\Sigma=\Sigma_n$ 
    with $n$ large enough then
   $\Sigma$ satisfies
   \be 
   \label{errorrate}
   \dist(K,\Sigma_n\cap K)_{C[0,1]}\le n^{-1+1/p},\quad n\ge 1.
   \ee
   This means that the hypothesis of Theorem \ref{T3:discrete} are satisfied for $p>1$ and so solving the optimization problem \eref{min3} gives us a near optimal recovery provided $n$ is sufficiently large.  We want to see how large we need to take $n$ but before doing that we make
   the following remark.
   
   \begin{remark}
       \label{R:favorable}
      We know from Lemma \ref{L:pwl} that the piecewise linear function $S$ which interpolates the data  is in $K_w$ and therefore is itself a near optimal recovery with constant $C=2$.   In other  words, in this very special case, using an ad hoc analysis we can avoid solving
   a minimization problem and simply take the piecewise linear interpolant to the data.  Results of this special type are referred to as representer theorems and are preferred over minimization of loss functions when
   such representer theorems are known, see for e.g. \cite{unser2021unifying}.  The results of the present  article apply when representer theorems are not known.
   \end{remark}
   
   We proceed as if a representer theorem is not known to us  and instead we apply Theorem \ref{T3:discrete}. We want to see how  large we would need to take $n$ and how small we have to take $\mu$. For this, we need to give good estimates for $R(K_w)_{L_2[0,1]}$ and $R(K(w,2\e))_{L_2[0,1]}$, $\e>0$. We continue using the above notation, in particular for $h(\bx):=\max_{1\leq j<m}|x_{j+1}-x_j|$,  $S_w$  for the piecewise linear interpolant to the data $w$, and $I_j=[x_j,x_{j+1}]$, $1\leq j <m$. For notational convenience, we only consider the case  $1< p\le 2$.
   \begin{lemma}
       \label{L:boundR}
        If $1 < p\le 2$,
         $s=3/2-1/p$, and $R:=R(K_w)_{L_2[0,1]}>0$, then we have
     $\|S_w'\|_{L_p[0,1]} <1$ and  
       \be 
       \label{TR1}
       \Lambda h(\bx)^s\le R \le h(\bx)^s,\quad \Lambda:=1-\|S_w'\|_{L_p[0,1]},
       \ee 
       and 
       \be 
       \label{TR2}
        R(K(w,\e))_{L_2[0,1]}\le  h(\bx)^s+ \e\sqrt{m h(\bx)}.
       \ee 
   \end{lemma}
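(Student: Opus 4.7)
\emph{Plan.} The strategy is to use the piecewise linear data interpolant $S_w$ (with breakpoints at $x_1,\dots,x_m$) as an effective Chebyshev center and to pipe the estimates of Lemma~\ref{L:pwl} through it.

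\emph{Strictness and upper bound.} For every $f\in K_w$, the piecewise linear interpolant of $f$ at the data sites is exactly $S_w$ since $f(x_j)=w_j$, so Lemma~\ref{L:pwl}(i) yields $\|S_w'\|_{L_p[0,1]}\le \|f'\|_{L_p[0,1]}\le 1$. To upgrade this to $<1$ I would inspect the equality case of the Jensen step \eqref{slopes}: equality on an interval $I_j$ forces $f'$ to be constant on $I_j$, hence $f=S_w$ on $I_j$; if equality held on every $I_j$ then $K_w=\{S_w\}$, giving $R=0$ and contradicting the hypothesis $R>0$. Lemma~\ref{L:pwl}(ii), applied with $h=h(\bx)$ and the interpolant being $S_w$, then supplies $\|f-S_w\|_{L_2}\le \|f'\|_{L_p}\,h(\bx)^s\le h(\bx)^s$ for all $f\in K_w$, so $R\le h(\bx)^s$ with $S_w$ playing the role of center.

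\emph{Inflated radius.} For \eqref{TR2} I would keep $S_w$ as center. If $f\in K(w,\e)$ then $f\in K_{w'}$ for some $w'$ with $\|w'-w\|\le \e$, and the triangle inequality combined with the previous step (applied to $K_{w'}$) gives
\[
\|f-S_w\|_{L_2}\le \|f-S_{w'}\|_{L_2}+\|S_{w'}-S_w\|_{L_2}\le h(\bx)^s+\|S_{w'}-S_w\|_{L_2}.
\]
Since $S_{w'}-S_w$ is piecewise linear on the partition $\bx$ with nodal values $w'-w$, a direct evaluation of $\int_{I_j}|S_{w'}-S_w|^2$ together with $|I_j|\le h(\bx)$ yields $\|S_{w'}-S_w\|_{L_2}^2\le h(\bx)\sum_{j=1}^m(w'_j-w_j)^2 = m\,h(\bx)\|w'-w\|^2\le m\,h(\bx)\,\e^2$, which is precisely the claimed $\e\sqrt{m\,h(\bx)}$.

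\emph{Lower bound and the main obstacle.} For $R\ge \Lambda h(\bx)^s$ I would construct two functions in $K_w$ whose $L_2$ distance is of order $\Lambda h(\bx)^s$. Let $I_k$ be a longest interval and let $\phi$ be a bump (say a tent, or the $L_p$-extremizer) supported on $I_k$, vanishing at its endpoints, scaled so that $\|\phi'\|_{L_p[0,1]}=\Lambda$; an elementary computation then gives $\|\phi\|_{L_2[0,1]}\asymp \Lambda\,h(\bx)^s$. Taking any $f_0\in K_w$ and setting $f_\pm:=f_0\pm \phi$, one has $f_\pm(x_j)=w_j$, and the Chebyshev diameter estimate would give $2R\ge \|f_+-f_-\|_{L_2}=2\|\phi\|_{L_2}$ provided both $f_\pm$ lie in $K$. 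The hard part will be verifying this membership: first, the naive derivative bound $\|f_0'\|_{L_p}+\|\phi'\|_{L_p}$ can exceed $1$, so one must exploit that $\phi'$ is localized in $I_k$ and split the $L_p$ integrals on $I_k$ and on $I\setminus I_k$ sharply, using $\|S_w'\|_{L_p}\le \|f_0'\|_{L_p}$ to realize the slack $\Lambda=1-\|S_w'\|_{L_p}$; second, since $K$ is the ball in the \emph{max} norm $\max(\|\cdot\|_{L_p},\|\cdot\|_{L_p}\text{ of }{}')$, one must also certify $\|f_\pm\|_{L_p}\le 1$, which forces a delicate choice of the reference $f_0$ (e.g.\ a minimum-norm interpolant) and is where the bulk of the technical work, and the precise constant $\Lambda$, is pinned down.
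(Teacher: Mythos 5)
Your upper bound, your treatment of the inflated radius, and your lower-bound construction all coincide with the paper's proof: the paper also uses $S_w$ as the center for \eref{TR1} and \eref{TR2}, bounds $\|S_w-S_{w'}\|_{L_2[0,1]}^2\le h(\bx)\sum_j|w_j-w_j'|^2$ in exactly the way you describe, and proves the lower bound by perturbing $S_w$ by a tent $g$ supported on a longest interval with $\|g'\|_{L_p[0,1]}=\Lambda$. Your argument for $\|S_w'\|_{L_p[0,1]}<1$ via the equality case of the H\"older step in \eref{slopes} is a genuine (and valid) alternative to the paper's route, which instead takes any $g\in K_w$ with $g\neq S_w$, applies Lemma \ref{L:pwl}(i) to $\tfrac12(g+S_w)$, and invokes strict convexity of the $L_p$ ball for $p>1$; both arguments use $p>1$ in an essential way and either is acceptable.

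The one place where you stop short is the membership verification $f_\pm\in K_w$ in the lower bound, which you flag as "the hard part" requiring a sharp splitting of $L_p$ integrals over $I_k$ and its complement and a delicate choice of $f_0$. It is not: take $f_0=S_w$. Then $\|S_w'\|_{L_p[0,1]}=1-\Lambda$ by the very definition of $\Lambda$, so the plain triangle inequality gives $\|S_w'\pm g'\|_{L_p[0,1]}\le(1-\Lambda)+\Lambda=1$ with no localization argument at all; the slack $\Lambda$ is exactly engineered for this. Your second worry, certifying $\|f_\pm\|_{L_p[0,1]}\le1$, evaporates under the paper's standing normalization in this subsection that $0$ and $1$ are data sites with $w_1=0$: since $(S_w\pm g)(0)=0$ and $\|(S_w\pm g)'\|_{L_p[0,1]}\le1$, one gets $\|S_w\pm g\|_{L_\infty[0,1]}\le\|(S_w\pm g)'\|_{L_1[0,1]}\le1$, so the function-norm constraint is automatic and no minimum-norm interpolant is needed. (Incidentally, the explicit computation of $\|g\|_{L_2[0,1]}$ yields the lower bound only up to the constant $\tfrac1{4\sqrt2}$, a discrepancy with the clean constant stated in \eref{TR1} that is present in the paper's own proof as well, so your "$\asymp$" is the honest statement.)
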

   \begin{proof}  We know from Lemma \ref{L:pwl} applied for the points $0=x_1<\ldots<x_m=1$ that $\|S_w'\|_{L_p[0,1]}\le 1$ and 
   $S_w\in K_w$.  
We first want to prove that if $R>0$, then we necessarily have $\Lambda>0$.  Indeed, if $R>0$, then there is a
   $g\in K_w$ such that $\|g-S_w\|_{L_2[0,1]}>0 $ and $\|g'\|_{L_p[0,1]} \leq 1$. The function $\frac{1}{2}(g+S_w)$ also is in $K_w$.
   Since $p>1$, then the strict convexity of the $L_p$ ball implies that 
   $$
   \frac{1}{2}\|g'+S_w'\|_{L_p[0,1]}< \frac 12 \left(\|g'\|_{L_p[0,1]} + \| S_w' \|_{L_p[0,1]} \right) \leq 1.
   $$   From  (i) of   Lemma \ref{L:pwl} (applied for $f=\frac{1}{2}(g+S_w)$) we derive that  $\|S_w'\|_{L_p[0,1]}<1$ and hence $\Lambda>0$.  

   Now,  consider \eref{TR1}.  The upper inequality follows from Lemma \ref{L:pwl} with $m=n$ and $\xi_j=x_j$, $1\leq j < m$. We next  prove the lower inequality.  Let $I=I_j$ be an interval $[x_j,x_{j+1}]$ with $|I_j|=h(\bx)$ and let $I_-$ be the left half of $I$ and $I_+$ be the right half of $I$.   We define  $g':= c[\chi_{I_+}-\chi_{I_-}]$ with $c:=\Lambda |I|^{-1/p}$, where $\chi_J$ denotes the characteristic function of an interval $J$.
The functions $S_w\pm g$ are both in $K_w$ and so $R\ge \|g\|_{L_2[0,1]}$.  Now $|g|$ is larger than $c|I|/4$ on the middle  half of $I$.  Therefore
$$
\|g\|_{L_2[0,1]}\ge c(|I|/4)(|I|/2)^{1/2}\ge \frac{1}{4\sqrt{2}}\Lambda |I|^{3/2-1/p}=\frac{1}{4\sqrt{2}} \Lambda h(\bx)^s.$$
This proves the lower inequality in \eref{TR1}.

Next, we prove \eref{TR2}.  Let $S_w$ and $S_{w'}$ be the piecewise linear interpolants for data $w$ and $w'$, respectively.  If $\|w-w'\|\le \e$, then
$$
\|S_w-S_{w'}\|^2_{L_2[0,1]}\le h(\bx) \sum_{j=1}^m|w_j-w'_j|^2 \le h(\bx) m\e^2.
$$
Now, if $f\in K(w,\e)$ and $\lambda(f)=w'$, then
\be 
\label{LbR}
\|f-S_w\|_{L_2[0,1]}\le \|f-S_{w'}\|_{L_2[0,1]}+\|S_w-S_{w'}\|_{L_2[0,1]}\le h(\bx)^s+ \e\sqrt{h(\bx)m},
\ee 
where we used Lemma~\ref{L:pwl}.
This proves \eref{TR2} and completes the proof of the lemma.
\end{proof}

We use the above lemma to see how large we have to take $n$ and how to choose $\mu$ in Theorem \ref{T3:discrete} in order to guarantee that
$\hat f$ is a near optimal recovery with constant $C$  in the $X=L_2[0,1]$ norm. Note that $C_X=1$ in this case.  We need  that $C$ is large enough and $\mu$ is small enough so that
\be 
\label{musatisfy}
\e+2R(K(w,2\e))_{L_2[0,1]}\le CR(K_w)_{L_2[0,1]},\quad \e:=\mu\max(\mu+1,C_Y).
\ee 
  The lemma gives  the bounds $R(K_w)_{L_2[0,1]}\ge \Lambda h(\bx)^s$ and $R(K(w,2\e))_{L_2[0,1]}\le  h(\bx)^s+ 2\e\sqrt{m h(\bx)}$.  We see that \eref{musatisfy} holds provided
\be
\label{provided}
\e\le  h(\bx)^{s-\frac 1 2} m^{-\frac{1}{2}}  \quad {\rm and}\quad C=7\Lambda^{-1},
 \ee 
since $h(\bx)\ge 1/m$. It is enough to take  
$\e\le  m^{-s}$ (since $s>1/2$) and similarly  $\mu\leq m^{-s}$.  Looking back at Theorem \ref{T3:discrete}, we need the approximation error  
 $\delta$  (as measured in $C(\Omega)$) to satisfy $\delta\le \mu^2$.  Since the approximation accuracy in $C(\Omega)$ is $O(n^{-1+1/p})$, 
   we need 
   $$ n\ge m^{\frac{2s}{1-1/p}} \quad \hbox{and}\quad 
   \mu\le m^{-s}
   $$
in the minimization problem \eref{loss11} to find a near optimal recovery of $f$.   

A similar analysis can be made when using the loss function \eref{loss113}.
   This example illustrates when given a compact set $K$, how one determines how well
   $\Sigma_n$ must approximate $K$ and how we must choose $\mu$
   so that solving the minimization problem gives a near optimal recovery from the given data.
   }

   \subsubsection{Numerical experiments}
   We next discuss the numerical implementation of the optimization with the loss \eref{loss113} for this special $K=U(W^1(L_p[0,1]))$.   
   We consider $\Sigma_n$ to be the space of continuous piecewise linear functions with breakpoints $\xi_j=j/n$, $0 \leq j \leq n$.
   We can parameterize $\Sigma_n$ using the hat function basis
   $H_j$, $j=0,\dots,n$,  where $H_j$ is the continuous piecewise linear function which takes the value one at $\xi_j$ and the value $0$
   at all other $\xi_i$, $i\neq j$.  Then each ${\nnew g}\in\Sigma_n$ can be written as
   \be
   \label{repS}
  {\nnew g(x)=g_\bc(x):=\sum_{j=0}^nc_jH_j(x),\quad x\in [0,1],}
   \ee   
   where $\bc=(c_1,\dots,c_n)$. Consider now the loss as a function of the parameters
   $\bc=(c_1,\dots,c_n)$:
   \be
   \label{parloss_c}
   {\nnew\cL^*(\bc):=\cL_\mu(g_\bc),\quad \bc\in\R^n.}
   \ee 
   The loss function $\cL^*$  is strictly convex whenever $1<p<\infty$ because it is the composition of a strictly convex function with an affine function.
   
    To numerically compute the minimum of $\cL_\mu$ over
    $\Sigma_n$ we minimize $\cL^*$ over $\R^n$ and use the argument $\bc^*$ attaining this minimum to define the
    minimizer $\hat f:=g_{\bc^*}$.  To compute $\bc^*$ we   use gradient descent with a sufficiently small step size and an initial guess.  Since the loss is nonnegative and  its gradient is locally Lipschitz except at $\bc = 0$, the algorithm converges (see \cite{Armijo}).

     As a numerical example, we take
     \be
     \label{ex}
     f(x)=\frac 1 4 x^{\frac 1 2}, \quad x\in [0,1].
     \ee  
     This function is in $W^1(L_p[0,1])$ for all $p<2$.  As a specific model class that contains $f$, we take
     \be
     \label{ex1}
      K=U(W^1(L_p[0,1])),\ p=3/2.
     \ee 
     This gives that $s=5/6$.
     While  we can implement the {\nnew algorithm} for any data observations,
     in order to get a spectrum of performance results, we
     take   random data samples consisting of $m=10,20,40,80,160,320$   observations. The  additional observations are chosen randomly while retaining the previous random observations.   Thus, we have a nested set of observations.
     
     The random draws turned out  to give  the  following values for $h$:
     
     $$h(\bx) = 0.13, 0.13, 0.108, 0.062, 0.048, 0.021.$$  
     For each of these values of $m$, we choose $n = 2m$ and $\mu = 0.1 m^{-s}$.   Note that this choice of $n$ is less than suggested by the theoretical estimates.
    
    Figure~\ref{f:linear_learning_asympt} 
    gives a graph of the true recovery rate and compares it with the bound  $h(\bx)^{\frac 56}$ which is our bound for the Chebyshev radius of $K_w$ and hence optimal recovery rate for these data observations. 
    We observe an asymptotic decay better than $h(\bx)^s$ (because we have taken only  one function in $K_w$ and not the supremum over all possible $f\in K_w$).
   
    \begin{figure}[ht!]
   \begin{center} 
   \scalebox{0.8}{\input{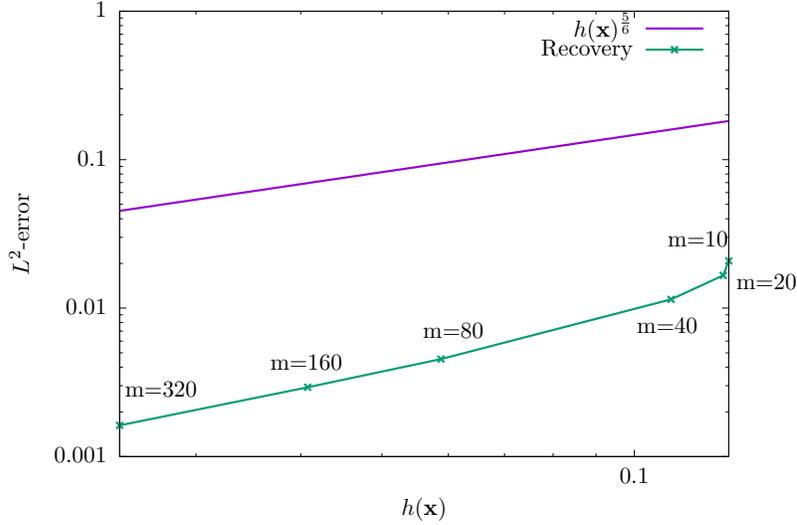}}
   \end{center}
   \caption{Recovery errors using $m=10, 20, 40, ..., 320$ random samples, $\mu = 0.1m^{-\frac 56}$, and $n=2m$. The error is compared with    $h(\bx)^{\frac 5 6}$ which is the   asymptotic behavior of the optimal error for the class $K=U(W^1(L_p(\Omega)))$, $p=\frac 32$.}\label{f:linear_learning_asympt}
   \end{figure}
    
    We next examine what happens if we do not use a penalty term, i.e., we take $\mu=0$
    and $n=2m$.  It is known that applying gradient descent
    with an initial choice of parameters converges to an interpolant which depends on the initial choice of parameters (see e.g. the discussion in \cite{DHP}).  We take the initial parameter choice as zero as we did in the case of a penalty term. 
   Figure~\ref{f:linear_learning_compare} compares the minimizing $\hat f$ for the case of $m=40$ without regularization ($\mu = 0$) 
  and with our proposed regularization ($\mu = 0.0046$). For the former, the over-parametrized {\nnew procedure}  produces a highly oscillating $\hat f$ that interpolates the data samples. In contrast, the constructed $\hat f$ for $\mu=0.0046$ exploits the regularity of $f \in W^1(L_p[0,1])$ and yields a recovery error $\| f - \hat f_\Sigma\|_{L_2[0,1]} = 0.011$, which is 10 times smaller than when using $\mu=0$. 
        
   \begin{figure}[ht!]
   \includegraphics[width=0.5\textwidth]{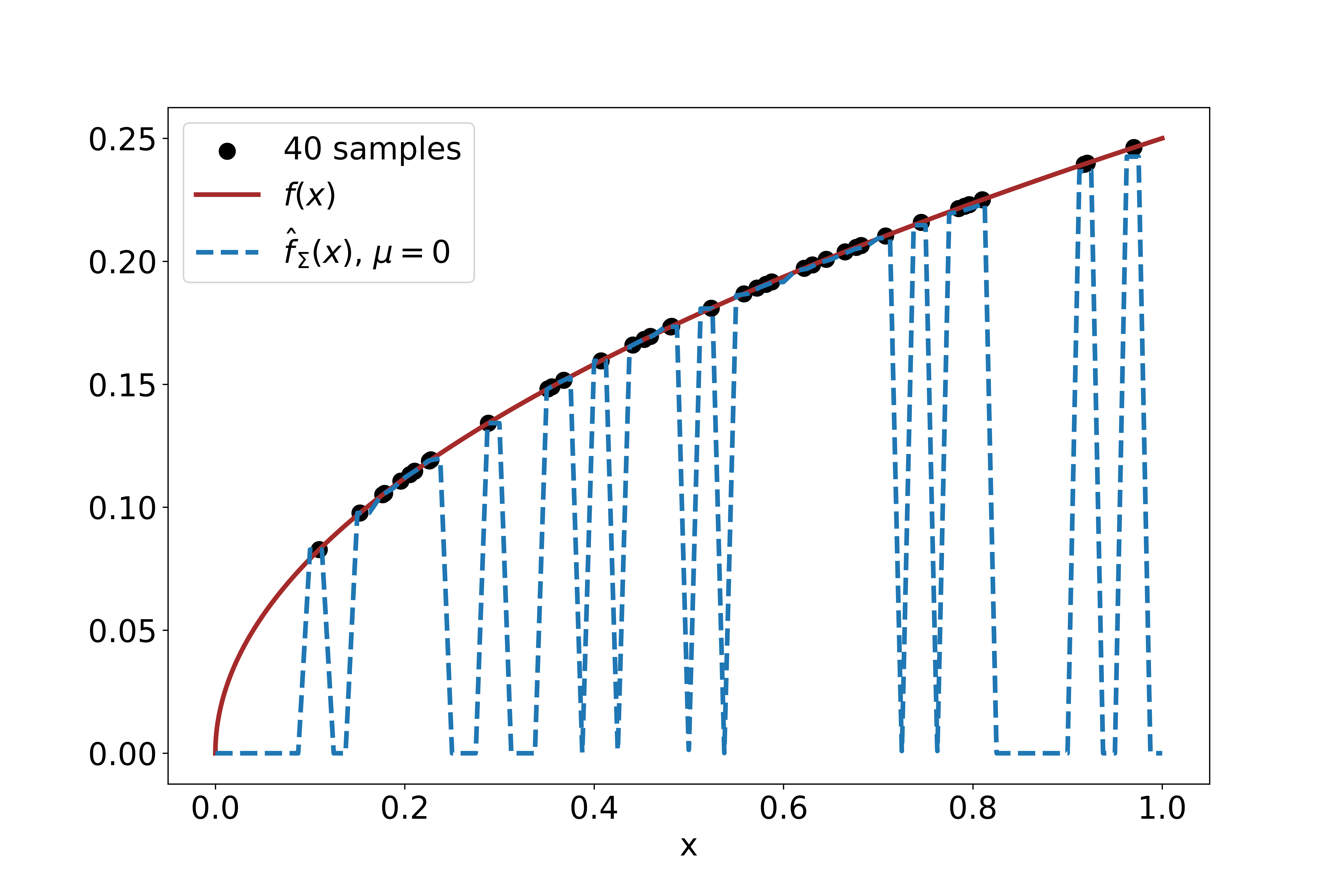}
   \includegraphics[width=0.5\textwidth]{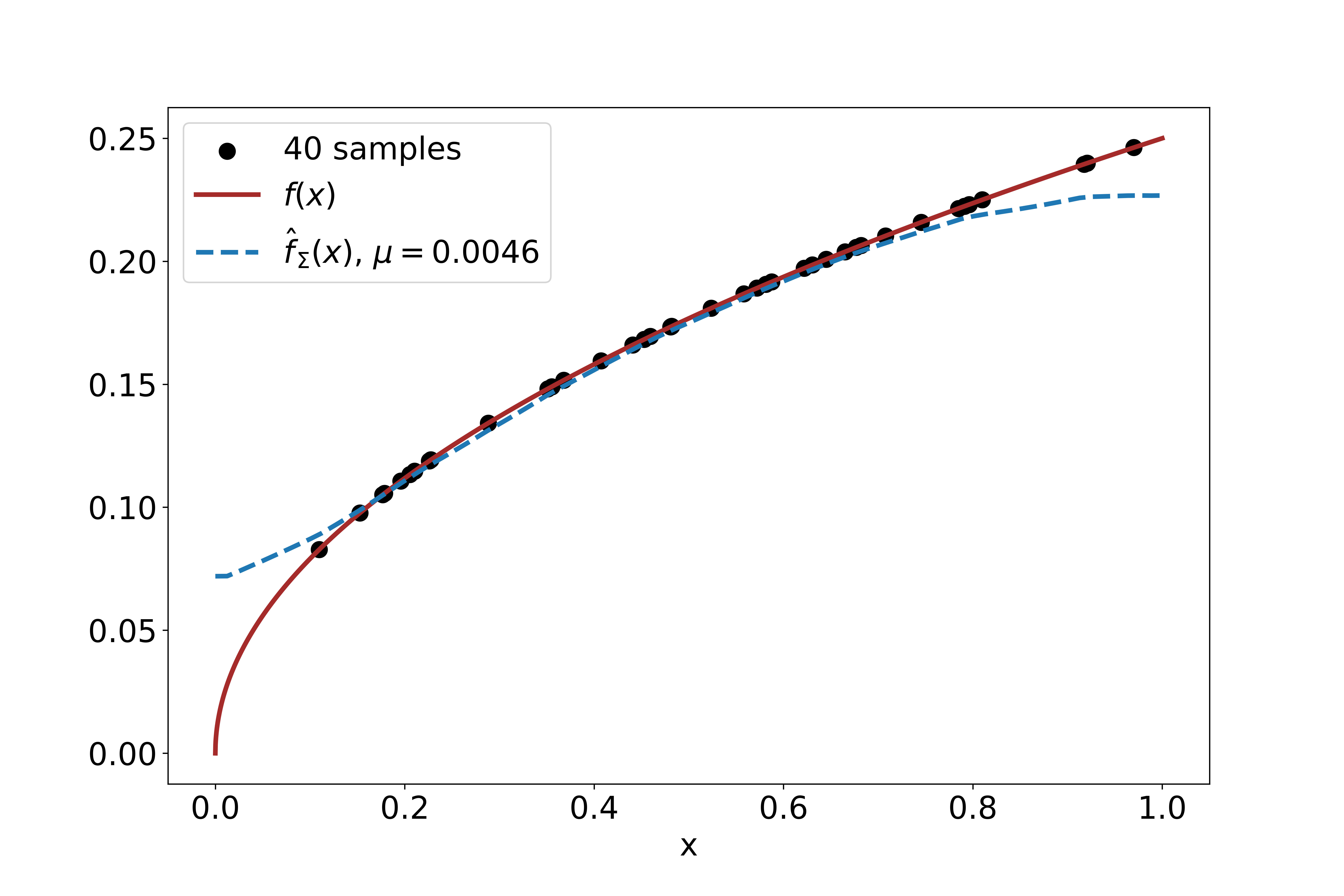}
   \caption{Learned function $\hat f_{\Sigma}$ in \eqref{loss113} with $p=3/2$, $\mu = 0$ (left) and $\mu = 0.0046$ (right), and using $40$ random samples (corresponding to $h(\bx) = 0.108$) of the function $f(x) = \frac 1 4 x^{\frac 1 2}$. The approximation space $\Sigma$ is the set of continuous piecewise linear functions subordinate to a uniform partition of $[0,1]$ using $n=80$ breakpoints. The recovery error $\| f - \hat f_\Sigma\|_{L_2[0,1]}$ with $\mu = 0.0046$ is $0.011$, which is 10 times smaller than when using $\mu=0$.}\label{f:linear_learning_compare}
   \end{figure}

   \subsection{Neural networks}
   \label{SS:Barron}
   It is now quite common in learning to take $\Sigma_n$
   as the space of outputs of a neural network depending on $n$ parameters.  We consider one often used example of this using ReLU activation.  Let $\Omega$ be the unit Euclidean ball in $\R^d$ with  $d\ge1$.  We consider the nonlinear space $\Sigma_n$  of  outputs of a single hidden layer ReLU  
   neural network of width $n$ on $\Omega$ (much less is known for deeper networks).   Each function ${\nnew g}\in\Sigma_n$ is of the form
   \be
   \label{ReLU} 
    {\nnew g(x)=c_0+\sum_{j=1}^n c_j (\omega_j\cdot x+b_j)_+,}
   \ee 
   where $\omega_j\in\R^d$, and the $c_j, b_j\in \R$.  This representation of ${\nnew g}$ is not unique.  We can  require $\omega_j$ to satisfy $\|(\omega_j)\|=\|(\omega_j)\|_{\ell_2}=1$ by adjusting the outer parameters $c_j$.  We can also require  the $b_j$ to be in $[-2,2]$. 
   The set $\Sigma_n$ is a nonlinear space of continuous piecewise linear functions on $\Omega$ .
   
   It is commonly thought that $\Sigma_n$ has significantly better  approximation properties than more traditional approximation methods based on polynomials,  splines, and wavelets and can therefore be more effective
   when learning a function $f$ from data.  This is especially thought to be true when $d$ is large as it is for many modern
   learning problems.  If this is indeed the case then it should be demonstrated through model classes $K$ whose elements
   can be better approximated by neural networks than by the traditional approximation methods.

   Accordingly, several model classes $K$ have been introduced and studied
   because they have favorable approximation properties when using $\Sigma_n$.  The most reknown  of these is the
   Barron class introduced in \cite{Barron} defined via Fourier
   transforms.  Several generalization of these classes (see e.g. \cite{EMW,SX,PN}) have been prominently studied.   Each of these model classes is of the form $K=U(Y)$ where $Y$ is a subspace of $C(\Omega)$.  They all have the feature that the functions in $K$ can be approximated
   in $L_q(\Omega)$, $1\le q\le \infty$, with an approximation rate $O(n^{-\alpha})$, $n\to\infty$, with $\alpha \ge 1/2$, and hence these model classes do not suffer the curse of dimensionality in terms of $d$.
   In going further with our discussion, we let $K$ be any of these model classes. We refer the reader to 
   \cite{DHP, SX} for  results on the approximation of functions in $K$ by the elements of  $\Sigma_n$.

   Optimal learning   for these classes can   be obtained via over-parameterized learning as described in Theorem \ref{T1:discrete}.  However, several important issues remain
   unresolved and prevent a complete theory for these model classes.  We describe these next where we assume the learning performance is to be measured in  $X=L_2(\Omega,\nu)$ metric.  Corresponding results are known for   $L_q$,  $1\le q\le \infty$, but in some cases are less precise. The discussion below should be compared
   with the previous subsection.
   
   Given data observations $w$ at data sites $\bx$ a major question that needs to be resolved is what
   is $R(K_w)_X$?  Results in this direction are for the most part unknown although some partial information can be obtained from Gelfand widths and sampling numbers.   Recall that
   Gelfand widths tell us the optimal learning rate that can be obtained for $K$ when using data given by $m$ linear functionals on $X$.  Upper bounds on Gelfand widths are given 
   in \cite{SX1} and one expects that these bounds are sharp.
   If we consider learning from point values of a function $f\in K$ the situation is more opaque. The sampling numbers for
   $K$ are not known.  Jonathan Siegel has provided us with an argument based on the Rademacher complexity of $K$ that shows that both the sampling numbers $s_m(K)_X$ and averaged sampling numbers $\bar s_m(K)_X$ of $K$ in $X$ are bounded
   by $Cm^{-1/4}$.  However, we do not know lower bounds for sampling numbers and what is perhaps more crucial is we do
   not know the near best positioning of the points $x_1,\dots,x_m$ at which to sample $f\in K$.  
 Some progress has been made recently in \cite{V}, where upper and lower bounds for sampling numbers for the smooth Barron classes have been obtained.
   Resolving these open questions is important in learning since it tells us how many samples we would need of a function $f\in K$ in order to recover it with a prescribed error $\e>0$.  Also, it would tell us how much over-parameterization we would need (how large to choose $n$ for $\Sigma_n$) to obtain optimal learning.

  When using  over-parameterized neural networks to solve the
  {\nnew finite-dimensional }  minimization in Theorem \ref{T1:discrete}, one can use   ridge regression or LASSO applied to the loss as a function of the coefficient in the representation \eref{ReLU}.  It is shown in \cite{PN} that there is always a minimizer which has a sparse representation \eref{ReLU}.  
  
  In summary, for these model
  classes $K$, we can numerically find a near optimal recovery of $f\in K$
  from given point data but we do not yet know the optimal learning rates nor do we know the optimal points where we should do the sampling.  Some crude bounds  bounds on performance and the amount of over-parameterization are known
  but  definitive results are still lacking.

 \section{Concluding remarks}
 \label{S:conclude}
 We have shown that optimal learning under a model class
 assumption $f\in K$ is always solved by an over-parameterized
 minimization problem.  The use of over-parameterization  matches what is typically done in modern machine learning.  However, it is important to point out that
 in many settings of modern learning one does not begin with a model class assumption
 and the loss function that is employed is simply  a least squares fitting of the data absent any penalty term.  In such a setting, i.e,  absent any model class assumption, there can be
 no theory to describe optimal performance since $f$ can be
 any function away from the data. 
 
 Another setting often studied is to employ neural networks $\Sigma_n$ in the loss function together with a regularization term in the loss function which
 penalizes the size of the parameters.  Such a penalty term can be viewed as imposing a model class assumption on the function to be learned.  A precise formulation of this connection must still be worked out.  One case where such a connection is known is when $K$ is the unit ball of the Radon BV space; see \cite{PN}.
 
 In the   setting without a model class assumption, as noted above, there are infinitely many solutions to the over-parameterized minimization problem.  The standard approach in learning is to choose one of these solutions by using a specific {\nnew procedure} to find an $\hat f$ corresponding to least squares loss.    The typical setting employs  over-parameterized deep neural networks in conjunction with minimization methods based on variants of gradient descent.  This is sometimes referred to as {\it deep learning}. The analysis of deep learning revolves around questions of whether such minimization procedures converge, how the limit
 depends on the initial parameter guess and the learning rate
 (step size in gradient descent), and if convergence does hold then what is the function $\hat f$ that is learned 
 (see the results on the Neural Tangent Kernel \cite{JGH,HN}).
  Another way to word  this approach is that one does not formulate a well defined learning problem (i.e. with a model class assumption)  but
 rather proposes a specific numerical method to utilize for learning and then centers the discussion on when this works well and why?   The current viewpoint is that the numerical method 
 implicitly imposes a model class assumption (described via neural tangent kernels).  Why such an implicit model class assumption
 is natural for the given learning setting is still to be
 explained.
 \vskip .1in
 \noindent
{\bf Acknowledgment:} The authors thank Professor Albert Cohen for helpful discussions 
on the research in this paper.  The authors also thank the referees
for excellent comments that improved this manuscript.
 The work of PB  was partially supported  by  the NSF Grants  DMS 1720297 and  DMS 2038080.
 The work of AB was partially supported  by  the NSF Grant
 DMS 2110811. The work of RD and GP was partially supported by the ONR Contract N00014-20-1-278, the    NSF Grant  DMS 2134077, and the NSF-Tripods Grant CCF-1934904.

 \vskip .1in
 \noindent
{\bf Declarations:}  

\noindent
Conflict of interest: The authors declare that they have no conflict of interest.

 \bibliographystyle{abbrv}


 \end{document}